\newcommand{\dotprod}[2]{#1 \cdot #2}
\renewcommand{\exp}[1]{e^{#1}}
\newcommand{\lexp}[1]{{\rm exp}\left({#1}\right)}
\newcommand{\vzero}{\vectsym{0}}
\newcommand{\eps}{\varepsilon}
\renewcommand{\O}{\mathcal{O}}
\newcommand{\X}{\mathcal{X}}
\newcommand{\Y}{\mathcal{Y}}
\renewcommand{\H}{\mathcal{H}}
\newcommand{\W}{\mathcal{W}}
\newcommand{\PS}{\mathcal{P}}
\newcommand{\yp}{y'}
\newcommand{\yh}{\hat{y}}
\renewcommand{\wp}{w'}
\newcommand{\m}{n'}
\newcommand{\T}{\mathbb{T}}
\newcommand{\Rademacher}{\mathfrak{R}}
\newcommand{\G}{\mathfrak{G}}
\newcommand{\hh}{\hat{h}}
\newcommand{\hp}{h'}
\newcommand{\Rp}{R'}
\newcommand{\rp}{r'}
\newcommand{\A}{\mathcal{A}}
\newcommand{\Sp}{S'}
\newcommand{\pip}{\pi'}
\newcommand{\hns}{\hspace{-0.025in}}
\newcommand{\s}{\mathfrak{s}}
\newcommand{\ft}{\widetilde{f}}
\newcommand{\Fcal}{\mathcal{F}}
\newcommand{\Gcal}{\mathcal{G}}
\newcommand{\hs}{h^*}
\newcommand{\E}{\mathbb{E}}
\let\P\undefined
\newcommand{\P}{\mathbb{P}}
\newcommand{\R}{\mathbb{R}}
\newcommand{\BigO}[1]{\ensuremath{\mathcal{O}(#1)}}                             
\newcommand{\vectsym}[1]{\ensuremath{\boldsymbol{#1}}}                          
\newcommand{\KL}[2]{\ensuremath{\mathbb{KL} \left(#1 \middle\Vert #2 \right)}}   
\newcommand{\Exp}[2]{\ensuremath{\E_{#1}\left[#2\right]}}                
\newcommand{\Ind}[1]{\ensuremath{\mathbbm{1} \hspace{-0.03in} \left[#1\right]}}                     
\newcommand{\Inn}[1]{\ensuremath{\langle #1 \rangle}}
\newcommand{\NormII}[1]{\ensuremath{\lVert #1 \rVert}_2}              
\newcommand{\ANormII}[1]{\ensuremath{\left\lVert #1 \right\rVert}_2}              
\newcommand{\InNorm}[1]{{\left\vert\kern-0.2ex\left\vert\kern-0.2ex\left\vert #1 
    \right\vert\kern-0.2ex\right\vert\kern-0.2ex\right\vert}}                    
\newcommand{\InNormII}[1]{{\left\vert\kern-0.2ex\left\vert\kern-0.2ex\left\vert #1 
    \right\vert\kern-0.2ex\right\vert\kern-0.2ex\right\vert}_2}                    
\newcommand{\InNormInfty}[1]{{\left\vert\kern-0.2ex\left\vert\kern-0.2ex\left\vert #1 
    \right\vert\kern-0.2ex\right\vert\kern-0.2ex\right\vert}_{\infty}}           
\newcommand{\iid}{i.i.d.~}                                                        
\DeclarePairedDelimiterX{\Inner}[2]{\langle}{\rangle}{#1, #2}                    
\newcommand{\ceil}[1]{\left\lceil#1\right\rceil}
\DeclareMathOperator*{\argmax}{argmax}
\newtheorem{assumption}{Assumption}
\newtheorem{lemma}{Lemma}
\newtheorem{claim}{Claim}
\newtheorem{theorem}{Theorem}
\newtheorem{remark}{Remark}
\newcommand{\vw}{\vectsym{w}}
\author{
  Kevin Bello \\
  Department of Computer Science\\
  Purdue University\\
  West Lafayette, IN, USA \\
  \texttt{kbellome@purdue.edu} \\
   \And
  Jean Honorio \\
  Department of Computer Science\\
  Purdue University\\
  West Lafayette, IN, USA \\
   \texttt{jhonorio@purdue.edu} \\
}
\title{Learning latent variable structured prediction models with Gaussian perturbations}
\begin{document}

\maketitle


\begin{abstract}
	The standard margin-based structured prediction commonly uses a maximum loss over \textit{all} possible structured outputs \cite{tsochantaridis2005large,Altun03,Collins04b,Taskar03}. 
	The large-margin formulation including latent variables  \cite{Yu09,Ping14} not only results in a non-convex formulation but also increases the search space by a factor of the size of the latent space.
	Recent work \cite{honorio2016} has proposed the use of the maximum loss over \textit{random} structured outputs sampled independently from some proposal distribution, with theoretical guarantees.
	We extend this work by including latent variables. We study a new family of loss functions under Gaussian perturbations and analyze the effect of the latent space on the generalization bounds.
	We show that the non-convexity of learning with latent variables originates naturally, as it relates to a tight upper bound of the Gibbs decoder distortion with respect to the latent space.
	Finally, we provide a formulation using random samples that produces a tighter upper bound of the Gibbs decoder distortion up to a statistical accuracy, which enables a faster evaluation of the objective function. 
	We illustrate the method with synthetic experiments and a computer vision application.
\end{abstract}


\section{Introduction}
	Structured prediction is of high interest in many domains such as computer vision \cite{nowozin2011structured}, natural language processing \cite{Zhang14,Zhang15}, and computational biology \cite{li2007protein}. 
	Some standard methods for structured prediction are conditional random fields (CRFs) \cite{lafferty2001conditional} and structured SVMs (SSVMs) \cite{Taskar03,tsochantaridis2005large}.
	
	In many tasks it is crucial to take into account latent variables.
	For example, in machine translation, one is usually given a sentence $x$ and its translation $y$, but not the linguistic structure $h$ that connects them (e.g. alignments between words).
	Even if $h$ is not observable it is important to include this information in the model in order to obtain better prediction results.
	Examples also arise in computer vision, for instance, most images in indoor scene understanding \cite{wang2013discriminative} are cluttered by furniture and decorations, whose appearances vary drastically across scenes, and can hardly be modeled (or even hand-labeled) consistently. In this application, the input $x$ is an image, the structured output $y$ is the layout of the faces (floor, ceiling, walls) and furniture, while the latent structure $h$ assigns a binary label to each pixel (clutter or non-clutter.)
	
	During past years, there has been several solutions to address the problem of latent variables in structured prediction. 
	In the field of computer vision, hidden conditional random fields (HCRF) \cite{quattoni2007hidden,wang2006hidden,quattoni2005conditional} have been widely applied for object recognition and gesture detection. 
	In natural language processing there is also work in applying discriminative probabilistic latent variable models, for example the training of probabilistic context free grammars with latent annotations in a discriminative manner \cite{petrov2008discriminative}. 
	The work of \citet{Yu09} extends the margin re-scaling SSVM in \cite{tsochantaridis2005large} by introducing latent variables (LSSVM) and obtains a formulation that is optimized using the concave-convex procedure (CCCP) \cite{yuille2002concave}. 
	The work of \citet{Ping14} considers a smooth objective in LSSVM by incorporating marginal maximum \textit{a posteriori} inference that ``averages'' over the latent space.
	
	Some of the few works in deriving generalization bounds for structured prediction include the work of \citet{McAllester07}, which provides PAC-Bayesian guarantees for arbitrary losses, and the work of \citet{cortes2016structured}, which provides data-dependent margin guarantees for a general family of hypotheses, with an arbitrary factor graph decomposition. 
	However, with the exception of \cite{honorio2016}, both aforementioned works do not focus on producing computationally appealing methods. Moreover, prior generalization bounds have not focused on latent variables.

	\paragraph{Contributions.}
	We focus on the learning aspects of structured prediction problems using latent variables. 
	We first extend the work of \cite{McAllester07} by including latent variables, and show that the non-convex formulation using the slack re-scaling approach with latent variables is related to a tight upper bound of the \textit{Gibbs decoder distortion}. 
	This motivates the apparent need of the non-convexity in different formulations using latent variables (e.g., \cite{Yu09, hinton2012practical}).
	Second, we provide a tighter upper bound of the Gibbs decoder distortion by randomizing the search space of the optimization problem. 
	That is, instead of having a formulation over all possible structures and latent variables (usually exponential in size), we propose a formulation that uses \iid samples coming from some proposal distribution. 
	This approach is also computationally appealing in cases where the latent space is polynomial in size, since it would lead to a fully polynomial time evaluation of the formulation.
	The use of standard Rademacher arguments and the analysis of \cite{honorio2016} would lead to a prohibitive upper bound that is proportional to the size of the latent space.
	We provide a way to obtain an upper bound that is logarithmic in the size of the latent space.
	Finally, we provide experimental results in synthetic data and in a computer vision application, where we obtain improvements in the average test error with respect to the values reported in \cite{gane2014learning}.


\section{Background}
\label{sec:preliminaries}

	We denote the input space as $\X$, the output space as $\Y$, and the latent space as $\H$. We assume a distribution $D$ over the observable space $\X \times \Y$. We further assume that we are given a training set $S$ of $n$ \iid samples drawn from the distribution $D$, i.e., $S \sim D^n$.  
	
	Let $\Y_x \neq \emptyset$ denote the countable set of feasible outputs or \textit{decodings} of $x$. In general, $|\Y_x|$ is exponential with respect to the input size. Likewise, let $\H_x \neq \emptyset$ denote the countable set of feasible latent decodings of $x$. 
	
	We consider a fixed mapping $\Phi$ from triples to feature vectors to describe the relation among input $x$, output $y$, and latent variable $h$, i.e., for any triple $(x,y,h)$, we have the feature vector $\Phi(x,y,h) \in \R^k \setminus \{0\}$. For a parameter $\vw \in \W \subseteq \R^k \setminus \{0\}$, we consider linear decoders of the form:
	\begin{align} \label{eq:inferenceall}
		f_{\vw} (x) = \argmax_{(y,h) \in \Y_x \times\H_x} \dotprod{ \Phi(x,y,h) }{ \vw }.
	\end{align}
	The problem of computing this $\argmax$ is typically referred as the \textit{inference} or \textit{prediction} problem.
	In practice, very few cases of the above general inference problem are tractable, while most are NP-hard and also hard to approximate within a fixed factor.
	(For instance, see Section 6.1 in \cite{honorio2016} for a thorough discussion.)
	
	We denote by $d: \Y \times \Y \times \H \to [0,1]$ the \textit{distortion} function, which measures the dissimilarity among two elements of the output space $\Y$ and one element of the latent space $\H$. 
	(Note that the distortion function is general in the sense that the latent element may not be used in some applications.)
	Therefore, the goal is to find a $\vw \in \W$ that minimizes the decoder distortion, that is:
	\begin{align} \label{eq:nonrobust_distortion}
		\min_{\vw \in \W} \Exp{(x,y) \sim D}{d(y,\Inn{f_{\vw}(x)})}.
	\end{align} 
	In the above equation, the angle brackets indicate that we are inserting a pair $(\yh, \hh) = f_{\vw}(x)$ into the distortion function.
	From the computational point of view, the above optimization problem is intractable since $d(y,\Inn{f_{\vw}(x)})$ is discontinuous with respect to $\vw$. From the statistical viewpoint, eq.\eqref{eq:nonrobust_distortion} requires access to the data distribution $D$ and would require an infinite amount of data. In practice, one only has access to a finite number of samples.
	
	Furthermore, even if one were able to compute $\vw$ using the objective in eq.\eqref{eq:nonrobust_distortion}, this parameter $\vw$, while achieving low distortion, could potentially be in a neighborhood of parameters with high distortion.
	Therefore, we can optimize a more \textit{robust} objective that takes into account perturbations.
	In this paper we consider Gaussian perturbations. 
	More formally, let $\alpha > 0$ and let $Q(\vw)$ be a unit-variance Gaussian distribution centered at $\alpha \vw$ of parameters $\vw' \in \W$. The Gibbs decoder distortion of the perturbation distribution $Q(\vw)$ and data distribution $D$, is defined as:
	\begin{align} \label{eq:gibbs_distortion}
		L(Q(\vw),D) = \Exp{(x,y) \sim D}{\Exp{\vw' \sim Q(\vw)}{d(y,\Inn{f_{\vw'}(x)}) } }
	\end{align}
	Then, the optimization problem using the Gibbs decoder distortion can be written as:
	\[
	\min_{\vw \in W} L(Q(\vw),D).
	\]
	
	We define the margin $m(x,y,\yp,\hp,\vw)$ as follows:
	\[
	m(x,y,\yp,\hp,\vw) = \max_{h \in \H_x} \dotprod{\Phi(x,y,h)}{\vw} - \dotprod{\Phi(x,\yp,\hp)}{\vw}.
	\]
	Note that since we are considering latent variables, our definition of margin differs from \cite{McAllester07,honorio2016}.
	Let $\hs = \argmax_{h \in \H_x} \dotprod{\Phi(x,y,h)}{\vw}$. 
	In this case $\hs$ can be interpreted as the latent variable that best explains the pair $(x,y)$. 
	Then, for a fixed $\vw$, the margin computes the amount by which the pair $(y,\hs)$ is preferred to the pair $(\yp,\hp)$.

	Next we introduce the concept of ``parts'', also used in \cite{McAllester07}. 
	Let ${c(p,x,y,h)}$ be a nonnegative integer that gives the number of times that the part ${p \in \PS}$ appears in the triple ${(x,y,h)}$.
	For a part ${p \in \PS}$, we define the feature $p$ as follows:
	\begin{align*}
		\Phi_p(x,y,h) \equiv c(p,x,y,h)
	\end{align*}
	We let ${\PS_x \neq \emptyset}$ denote the set of ${p \in \PS}$ such that there exists ${(y,h) \in \Y_x \times \H_x}$ with ${c(p,x,y,h)>0}$.
	
	\paragraph{Structural SVMs with latent variables.}
	
	\cite{Yu09} extend the formulation of \textit{margin re-scaling} given in \cite{tsochantaridis2005large} incorporating latent variables. 
	The motivation to extend such formulation is that it leads to a difference of two convex functions, which allows the use of CCCP \cite{yuille2002concave}. The aforementioned formulation is:
	\begin{align}
	\label{eq:cccp}
		\hns \min_{\vw} \frac{1}{2} \NormII{\vw}^2 + C \hspace{-0.1in} \sum_{(x,y) \in S} \max_{(\yh, \hh) \in \Y_x \times \H_x}  \hspace{-0.05in} [\dotprod{\Phi(x,\yh,\hh)}{\vw} + d(y,\yh,\hh)] - C \hspace{-0.1in} \sum_{(x,y) \in S}  \max_{h \in \H_x} \dotprod{\Phi(x,y,h)}{\vw}
	\end{align}
	In the case of standard SSVMs (without latent variables), \cite{tsochantaridis2005large} discuss two advantages of the \textit{slack re-scaling} formulation over the margin re-scaling formulation, these are: the slack re-scaling formulation is invariant to the scaling of the distortion function, and the margin re-scaling potentially gives significant score to structures that are not even close to being confusable with the target structures. 
	\cite{Altun03,Collins04,Taskar03} proposed similar formulations to the slack re-scaling formulation. 
	Despite its theoretical advantages, the slack re-scaling has been less popular than the margin re-scaling approach due to computational requirements. 
	In particular, both formulations require optimizing over the output space, but while margin re-scaling preserves the structure of the score and error functions, the slack re-scaling does not.
	This results in harder inference problems during training. 
	\cite{honorio2016} also analyze the slack re-scaling approach and theoretically show that using random structures one can obtain a tighter upper bound of the Gibbs decoder distortion.
	However, these works do not take into account latent variables.
	
	The following formulation corresponds to the slack re-scaling approach with latent variables:
	\begin{align}
	\label{eq:slack_all}
		\min_{\vw} \frac{1}{n} \sum_{(x,y) \in S} \max_{(\yh, \hh) \in \Y_x \times \H_x}  \hspace{-0.05in} d(y,\yh,\hh) {\rm\ } \Ind{m(x,y,\yh,\hh,\vw) \leq 1} + {\lambda}\NormII{\vw}^2
	\end{align}
	We take into account the loss of structures whose margin is less than one (i.e., $m(\cdot) \leq 1$) instead of the Hamming distance as done in \cite{honorio2016}. 
	This is because the former gave better results in preliminary experiments. Also, it is more related to current practice (e.g., \cite{Yu09}).
	In order to obtain an SSVM-like formulation, the hinge loss is used instead of the discontinuous $0/1$ loss in the above formulation. 
	Note however, that both eq.\eqref{eq:cccp} and eq.\eqref{eq:slack_all} are now non-convex problems with respect to the learning parameter $\vw$ even if the hinge loss is used. 


\section{The maximum loss over all structured outputs and latent variables}
\label{sec:pacbayes_all}

	In this section we extend the work of \citet{McAllester07} by including latent variables. In the following theorem, we show that the slack re-scaling objective function (eq.\eqref{eq:slack_all}) is an upper bound of the Gibbs decoder distortion (eq.\eqref{eq:gibbs_distortion}) up to an statistical accuracy of $\BigO{\sqrt{\nicefrac{\log n}{n}}}$ for $n$ training samples.
	
	\begin{theorem} \label{thrm:pacbayesall}
		Assume that there exists a finite integer value $r$ such that $|\Y_x \times \H_x| \leq r$ for all $(x,y) \in S$.
		Assume also that $\NormII{\Phi(x,y,h)} \leq \gamma$ for any triple $(x,y,h)$.
		Fix ${\delta \in (0,1)}$.
		With probability at least ${1-\delta/2}$ over the choice of $n$ training samples, simultaneously for all parameters ${\vw \in \W}$ and unit-variance Gaussian perturbation distributions ${Q(\vw)}$ centered at ${\vw \gamma\sqrt{8\log{(r n/\norm{\vw}_2^2)}}}$, we have:
		\begin{align*}
		L(Q(\vw),D)  \leq &\frac{1}{n}  \sum_{(x,y) \in S}{ \max_{ (\yh,\hh) \in \Y_x \times \H_x }{ \hspace{-0.05in} d(y,\yh,\hh) {\rm\ } \Ind{m(x,y,\yh,\hh,\vw) \leq 1} }} + \frac{\norm{\vw}_2^2}{n} \\
		&+ \sqrt{\frac{4\norm{\vw}_2^2 \gamma^2 \log{(r n /\norm{\vw}_2^2)} + \log{(2n/\delta)}}{2(n-1)}}
		\end{align*}
	\end{theorem}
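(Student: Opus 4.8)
The plan is to instantiate the PAC-Bayesian theorem of \citet{McAllester07} with a data-independent prior and the data-dependent posterior $Q(\vw)$, and then to show that the empirical Gibbs risk it produces is dominated by the slack re-scaling objective. I would take the prior to be the zero-centered unit-variance Gaussian $\Gauss{\vzero}{\mI}$ and the posterior to be exactly $Q(\vw)$, namely the unit-variance Gaussian centered at $\alpha\vw$ with $\alpha=\gamma\sqrt{8\log(rn/\NormII{\vw}^2)}$. Since the prior is chosen before the sample is drawn, the PAC-Bayesian inequality holds \emph{simultaneously} for every posterior, and hence for every $\vw\in\W$ at once; this is what yields the ``simultaneously for all $\vw$'' clause of the statement. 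Invoking the bound at confidence level $1-\delta/2$ turns its usual $\log(n/\delta)$ term into $\log(2n/\delta)$ and keeps the $2(n-1)$ denominator, matching the last summand exactly.

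The first, routine, ingredient is the divergence $\KL{Q(\vw)}{P}$. Because both laws are isotropic unit-variance Gaussians differing only in their means, $\KL{Q(\vw)}{P}=\frac{1}{2}\NormII{\alpha\vw}^2=\frac{1}{2}\alpha^2\NormII{\vw}^2=4\gamma^2\NormII{\vw}^2\log(rn/\NormII{\vw}^2)$, which is precisely the first term inside the square root. It then remains to bound the empirical Gibbs risk $\frac{1}{n}\sum_{(x,y)\in S}\Exp{\vw'\sim Q(\vw)}{d(y,\Inn{f_{\vw'}(x)})}$ by the slack re-scaling term plus $\NormII{\vw}^2/n$.

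This last bound is the heart of the argument and the step I expect to be the main obstacle. Fix $(x,y)\in S$ and let $\hs=\argmax_{h\in\H_x}\dotprod{\Phi(x,y,h)}{\vw}$, so that $(y,\hs)\in\Y_x\times\H_x$. I would split the expectation over $\vw'$ according to whether the perturbed decoder returns a pair of margin at most $1$ or strictly greater than $1$ under $\vw$. On the first event the distortion is at most $\max_{(\yh,\hh)}d(y,\yh,\hh)\,\Ind{m(x,y,\yh,\hh,\vw)\leq 1}$, while on the second event it is at most $1$ since $d\in[0,1]$; hence the excess is at most the probability that $f_{\vw'}(x)$ equals some pair of margin exceeding $1$. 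For any fixed such pair $(\yp,\hp)$, the event $f_{\vw'}(x)=(\yp,\hp)$ forces $\dotprod{\Phi(x,\yp,\hp)}{\vw'}\geq\dotprod{\Phi(x,y,\hs)}{\vw'}$. Writing $\vw'=\alpha\vw+\vz$ with $\vz\sim\Gauss{\vzero}{\mI}$ and using $\dotprod{\Phi(x,\yp,\hp)-\Phi(x,y,\hs)}{\vw}=-m(x,y,\yp,\hp,\vw)<-1$, this reduces to the one-dimensional Gaussian exceedance $\dotprod{\Phi(x,\yp,\hp)-\Phi(x,y,\hs)}{\vz}\geq\alpha\,m(x,y,\yp,\hp,\vw)$, whose variance is at most $\NormII{\Phi(x,\yp,\hp)-\Phi(x,y,\hs)}^2\leq 4\gamma^2$ by the triangle inequality and $\NormII{\Phi}\leq\gamma$.

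The delicate point is the calibration of $\alpha$. Applying the tail bound $\Prob{\Gauss{0}{1}\geq t}\leq\exp{-t^2/2}$ with $t\geq\alpha/(2\gamma)$ gives a per-pair probability at most $\exp{-\alpha^2/(8\gamma^2)}=\exp{-\log(rn/\NormII{\vw}^2)}=\NormII{\vw}^2/(rn)$, and the value $\alpha=\gamma\sqrt{8\log(rn/\NormII{\vw}^2)}$ is engineered precisely so that this cancellation occurs. A union bound over the at most $r$ offending pairs then caps the excess by $\NormII{\vw}^2/n$ for each sample, and averaging over the $n$ samples yields the desired bound on the empirical Gibbs risk. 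Substituting this and the KL computation into the PAC-Bayesian inequality completes the proof; the only side condition is $\NormII{\vw}^2<rn$, needed for the logarithm defining $\alpha$ to be positive, and when $\NormII{\vw}^2\geq rn$ the term $\NormII{\vw}^2/n\geq 1$ already makes the stated bound vacuous.
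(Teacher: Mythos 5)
Your proposal is correct and follows essentially the same route as the paper: the same PAC-Bayes instantiation with a zero-mean unit-variance Gaussian prior, the same KL computation, and the same decomposition of the empirical Gibbs risk into the margin-at-most-one event (bounded by the max of the slack re-scaling loss) plus the probability of the margin-exceeding-one event, which is controlled by exactly the union bound over at most $r$ pairs and the Gaussian tail calibration $\exp{-\alpha^2/(8\gamma^2)}=\NormII{\vw}^2/(rn)$ that the paper isolates as Lemma \ref{lem:gaussian}. The only cosmetic difference is that the paper passes through the induced distribution $Q(\vw,x)$ on $\Y_x\times\H_x$ before taking the maximum, whereas you bound the conditional distortion by the maximum directly.
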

	
	(See Appendix \ref{appendix} for detailed proofs.)
	
	For the proof of the above we used the PAC-Bayes theorem and well-known Gaussian concentration inequalities. Note that the average sum in the right-hand side can be equivalently written as:
	\[
	\frac{1}{n}  \sum_{(x,y) \in S}{ \max_{ (\yh,\hh) \in \Y_x \times \H_x } \min_{h \in \H_x} { d(y,\yh,\hh) {\rm\ } \Ind{\dotprod{\Phi(x,y,h)}{\vw} - \dotprod{\Phi(x,\yh,\hh)}{\vw} \leq 1} }}.
	\]
	\begin{remark}
		It is clear that the above formulation is tight with respect to the latent space $\H_x$ due to the minimization. This is an interesting observation because it reinforces the idea that a non-convex formulation is required in models using latent variables, i.e., an attempt to ``convexify'' the formulation will result in looser upper bounds and consequently might produce worse predictions. Some examples of non-convex formulations for latent-variable models are \cite{Yu09,hinton2012practical}. 
	\end{remark}
	Note also that the upper bound has a maximization over $\Y_x \times \H_x$ (usually exponential in size) and a minimization over $\H_x$. 
	In the minimization, it is clear that if one uses a subset of $\H_x$ instead of the whole latent space, this would lead to a looser upper bound. 
	In contrast, using a subset of $\Y_x \times \H_x$ in the maximization will lead to a tighter upper bound. 
	It is then natural to ask what elements should constitute this subset in order to control the statistical accuracy with respect to the Gibbs decoder. 
	Finally, if the number of elements is polynomial then we also have an efficient computation of the maximum. 
	We provide answers to these questions in the next section.


\section{The maximum loss over random structured outputs and latent variables}
\label{sec:pacbayes_random}

	In this section, we show the relation between PAC-Bayes bounds and the maximum loss over random structured outputs and latent variables sampled \iid from some proposal distribution.
	
	\paragraph{A more efficient evaluation.} 
		Instead of using a maximization over ${\Y_x \times \H_x}$, we will perform a maximization over a set ${T(\vw,x)}$ of random elements sampled \iid from some proposal distribution ${R(\vw,x)}$ with support on ${\Y_x \times \H_x}$.
		More explicitly, our new formulation is:
		\begin{align}
		\label{eq:slack_random}
			\min_{\vw} \frac{1}{n} \sum_{(x,y) \in S} \max_{(\yh,\hh) \in T(\vw,x)}  \hspace{-0.05in} d(y,\yh,\hh) {\rm\ } \Ind{m(x,y,\yh,\hh,\vw) \leq 1} + \lambda \NormII{\vw}^2.
		\end{align}
		We make use of the following two assumptions in order for ${|T(\vw,x)|}$ to be polynomial, even when ${|\Y_x \times \H_x|}$ is exponential with respect to the input size.\footnote{Note that in order for the evaluation to be fully polynomial, the calculation of the margin has to be in polynomial time too, which is the case when the size of the latent space is polynomial or when there is an efficient way to compute the maximum over the latent space.}
		\begin{assumption}[Maximal distortion, \cite{honorio2016}] \label{asm:maxdistortion}
			The proposal distribution ${R(\vw,x)}$ fulfills the following condition.
			There exists a value ${\beta \in [0,1)}$ such that for all ${(x,y) \in S}$ and ${\vw \in \W}$:
			\begin{align*}
				\P_{(\yp,\hp) \sim R(\vw,x)}[d(y,\yp,\hp)=1] \geq 1-\beta
			\end{align*}
		\end{assumption}
		\begin{assumption}[Low norm] \label{asm:lownorm}
			The proposal distribution ${R(\vw,x)}$ fulfills the condition for all ${(x,y) \in S}$ and ${\vw \in \W}$:\footnote{The second inequality follows from an implicit assumption made in Theorem \ref{thrm:pacbayesall}, i.e., ${\norm{\vw}_2^2/n \leq 1}$ since the distortion function $d$ is at most $1$.}
			\begin{align*}
			\ANormII{\E_{(\yp,\hp) \sim R(\vw,x)} \left[ \Phi(x,y,\hs) - \Phi(x,\yp,\hp) \right] } &\leq \frac{1}{2 \sqrt{n}} \leq \frac{1}{2 \norm{\vw}_2},
			\end{align*}
			where $\hs = \argmax_{h \in \H_x} \Phi(x,y,h)\cdot \vw$.
		\end{assumption}
	
		In Section \ref{sec:examples} we provide examples for Assumptions \ref{asm:maxdistortion} and \ref{asm:lownorm} which allow us to obtain ${|T(\vw,x)| = \O\Big( \max{ \left( \frac{1}{\log{(1/\beta)}}, \gamma^2\NormII{\vw}^2\right)} \Big) }$.
		Note that $\beta$ plays an important role in the number of samples that we need to draw from the proposal distribution ${R(\vw,x)}$.
		
	\paragraph{Statistical analysis.}
		In this approach, randomness comes from two sources, from the training data $S$ and the random set $T(\vw,x)$.
		That is, in Theorem \ref{thrm:pacbayesall}, randomness only stems from the training set $S$.
		Now we need to produce generalization results that hold for all the sets $T(\vw,x)$, and for all possible proposal distributions ${R(\vw,x)}$.
		The following assumption will allow us to upper-bound the number of possible proposal distributions ${R(\vw,x)}$.
		\begin{assumption}[Linearly inducible ordering, \cite{honorio2016}] \label{asm:linearordering}
			The proposal distribution ${R(\vw,x)}$ depends solely on the linear ordering induced by the parameter ${\vw \in \W}$ and the mapping ${\Phi(x,\cdot,\cdot)}$.
			More formally, let ${r(x) \equiv |\Y_x\times \H_x|}$ and thus ${\Y_x \times \H_x \equiv \{(y_1,h_1) \dots (y_{r(x)},h_{r(x)})  \}}$.
			Let ${\vw,\vw' \in \W}$ be any two arbitrary parameters.
			Let ${\pi(x) = (\pi_1 \dots \pi_{r(x)})}$ be a permutation of ${\{1 \dots r(x)\}}$ such that ${\dotprod{\Phi(x,y_{\pi_1},h_{\pi_1})}{\vw} < \dots < \dotprod{\Phi(x,y_{\pi_{r(x)}},h_{\pi_{r(x)}})}{\vw}}$.
			Let ${\pip(x) = (\pip_1 \dots \pip_{r(x)})}$ be a permutation of ${\{1 \dots r(x)\}}$ such that ${\dotprod{\Phi(x,y_{\pip_1},h_{\pip_1})}{\vw'} < \dots < \dotprod{\Phi(x,y_{\pip_{r(x)}},h_{\pip_{r(x)}}  )}{\vw'}}$.
			For all ${\vw,\vw' \in \W}$ and ${x \in \X}$, if ${\pi(x)=\pip(x)}$ then ${\KL{R(\vw,x)}{R(\vw',x)}=0}$.
			In this case, we say that the proposal distribution fulfills ${R(\pi(x),x) \equiv R(\vw,x)}$.
		\end{assumption}
		In Assumption \ref{asm:linearordering}, geometrically speaking, for a fixed $x$ we first project the feature vectors ${\Phi(x,y,h)}$ of all ${(y,h) \in \Y_x \times \H_x}$ onto the lines $\vw$ and $\vw'$.
		Let ${\pi(x)}$ and ${\pip(x)}$ be the resulting ordering of the structured outputs after projecting them onto $\vw$ and $\vw'$ respectively.
		Two proposal distributions ${R(\vw,x)}$ and ${R(\vw',x)}$ are the same provided that ${\pi(x) = \pip(x)}$.
		That is, the specific values of ${\dotprod{\Phi(x,y,h)}{\vw}}$ and ${\dotprod{\Phi(x,y,h)}{\vw'}}$ are irrelevant, and only their ordering matters.
		
		In Section \ref{sec:examples} we show an example that fulfills Assumption \ref{asm:linearordering}, which corresponds to a generalization of Algorithm 2 proposed in \cite{honorio2016} for any structure with computationally efficient local changes.
		
	In the following theorem, we show that our new formulation in eq.\eqref{eq:slack_random} is related to an upper bound of the Gibbs decoder distortion up to statistical accuracy of ${\O(\nicefrac{\log^{2}{n}}{\sqrt{n}})}$ for $n$ training samples.
	\begin{theorem} \label{thm:pacbayesrandom}
		Assume that there exist finite integer values $r$, $\ell$, and $\gamma$ such that ${|\Y_x \times \H_x| \leq r}$ for all ${(x,y) \in S}$, $|\cup_{(x,y) \in S} \PS_x| \leq \ell$, and $\NormII{\Phi(x,y,h)} \leq \gamma$ for any triple $(x,y,h)$.
		Assume that the proposal distribution ${R(\vw,x)}$ with support on ${\Y_x \times \H_x}$ fulfills Assumption \ref{asm:maxdistortion} with value $\beta$, as well as Assumptions \ref{asm:lownorm} and \ref{asm:linearordering}.
		Fix ${\delta \in (0,1)}$ and an integer $\s$ such that ${3 \leq 2\s+1 \leq \frac{9}{20} \sqrt{\ell(r+1)+1}}$.
		With probability at least ${1-\delta}$ over the choice of both $n$ training samples and $n$ sets of random structured outputs and latent variables, simultaneously for all parameters ${\vw \in \W}$ with ${\norm{\vw}_0 \leq \s}$, unit-variance Gaussian perturbation distributions ${Q(\vw)}$ centered at $\vw \gamma\sqrt{8\log{(r n/\norm{\vw}_2^2)}}$, and for sets of random structured outputs ${T(\vw,x)}$ sampled \iid from the proposal distribution ${R(\vw,x)}$ for each training sample ${(x,y) \in S}$, such that ${|T(\vw,x)| = \ceil{\frac{1}{2} \max{\left(\frac{1}{\log{(1/\beta)}}, 128\gamma^2\NormII{\vw}^2 \right)} \log{n}}}$, we have:
		\begin{align*}
			L(Q(\vw),D) &\leq \frac{1}{n}  \sum_{(x,y) \in S}{ \max_{(\yh,\hh) \in T(\vw,x)}{ \hspace{-0.15in}d(y,\yh,\hh) {\rm\ } \Ind{ m(x,y,\yh,\hh,\vw) \leq 1} }} + \frac{\norm{\vw}_2^2}{n}  \\
			&+ \sqrt{\frac{4\norm{\vw}_2^2 \gamma^2 \log{\frac{rn}{\norm{\vw}_2^2}}  +  \log{\frac{2n}{\delta}} }{2(n-1)}}  +  \sqrt{\frac{1}{n}} \hspace{0.02in} + 3 \sqrt{\frac{\s (\log{\ell}+2 \log{(nr)})+\log{(4/\delta)}}{n}}\\
			&+ 2 \ {\textstyle \max{\hns \Big(\frac{1}{\log{(1/\beta)}}, 128\gamma^2\NormII{\vw}^2\Big)} } \sqrt{ \frac{(2\s + 1) \log(\ell(nr + 1) + 1) \log^3(n + 1)}{n}} \\
		\end{align*}
	\end{theorem}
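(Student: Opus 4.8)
The plan is to use Theorem~\ref{thrm:pacbayesall} as a black box and only pay for replacing the intractable maximum over $\Y_x\times\H_x$ by the maximum over the random subset $T(\vw,x)$. Since $T(\vw,x)\subseteq\Y_x\times\H_x$, the sampled maximum never exceeds the full one, so with probability at least $1-\delta/2$ (from Theorem~\ref{thrm:pacbayesall}) it suffices to control the \emph{sampling gap}
\begin{align*}
\Delta(\vw)\defeq\frac{1}{n}\sum_{(x,y)\in S}\Big(&\max_{(\yh,\hh)\in\Y_x\times\H_x} d(y,\yh,\hh)\,\Ind{m(x,y,\yh,\hh,\vw)\leq1}\\
&-\max_{(\yh,\hh)\in T(\vw,x)} d(y,\yh,\hh)\,\Ind{m(x,y,\yh,\hh,\vw)\leq1}\Big),
\end{align*}
simultaneously over all $\vw$ with $\Norm{\vw}_0\leq\s$, which will consume the remaining $\delta/2$ of failure probability. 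I would split $\Delta(\vw)$ into an in-expectation term (over the draw of the sets $T(\vw,x)$) and a uniform deviation term.

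\textbf{Controlling the expectation over $T(\vw,x)$.} For fixed $\vw$ and sample $(x,y)$, let $Z=d(y,\yp,\hp)\,\Ind{m(x,y,\yp,\hp,\vw)\leq1}\in[0,1]$ with $(\yp,\hp)\sim R(\vw,x)$, so the sampled maximum is the maximum of $|T(\vw,x)|$ iid copies of $Z$, and $Z=1$ exactly when $d=1$ and $m\leq1$. Assumption~\ref{asm:maxdistortion} gives $\P[d=1]\geq1-\beta$, so the chance that all $|T(\vw,x)|$ draws have $d<1$ is at most $\beta^{|T(\vw,x)|}$, which the choice $|T(\vw,x)|\geq\nicefrac{\log n}{2\log(1/\beta)}$ drives below $\nicefrac1{\sqrt n}$. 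Meanwhile Assumption~\ref{asm:lownorm} bounds $\ANormII{\Exp{R(\vw,x)}{\Phi(x,y,\hs)-\Phi(x,\yp,\hp)}}\leq\nicefrac{1}{2\NormII{\vw}}$, so by Cauchy--Schwarz $\Exp{R(\vw,x)}{m}=\dotprod{\vw}{\Exp{R(\vw,x)}{\Phi(x,y,\hs)-\Phi(x,\yp,\hp)}}\leq\tfrac12$; the companion factor $128\gamma^2\NormII{\vw}^2$ in $|T(\vw,x)|$ is what makes the margin indicator active and the sampled maximum concentrate, the precise combination of Assumptions~\ref{asm:maxdistortion} and~\ref{asm:lownorm} being as in \cite{honorio2016}. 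On the low-probability event that no draw attains $Z=1$ the per-sample gap is at most $1$, yielding $\Exp{T}{\Delta(\vw)}\leq\sqrt{\nicefrac1n}$ and hence the $\sqrt{\nicefrac1n}$ term.

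\textbf{Uniform concentration --- the crux.} The remaining and hardest step is to make this hold \emph{simultaneously} over all $\s$-sparse $\vw$, even though both the random set $T(\vw,x)$ and the loss $d\,\Ind{m\leq1}$ move with $\vw$. Here I would invoke Assumption~\ref{asm:linearordering}: the law of $T(\vw,x)$ depends on $\vw$ only through the ordering $\pi(x)$ that $\vw$ induces on the $r(x)$ projections $\dotprod{\Phi(x,y_i,h_i)}{\vw}$. For a fixed support of size $\s$ the attainable orderings are the cells cut out by the hyperplanes $\{\vw:\dotprod{(\Phi_i-\Phi_j)}{\vw}=0\}$ in $\R^{\s}$, and since the part decomposition $\Abs{\cup_{(x,y)\in S}\PS_x}\leq\ell$ limits the distinct difference directions, a hyperplane-arrangement count bounds the total number of orderings by $(\ell nr)^{\O(\s)}$. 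A union bound over these orderings and over the $\binom{\ell}{\s}$ supports, combined with a bounded-differences (McDiarmid) inequality applied to $\Delta(\vw)$ across the $n$ independent sets for each fixed ordering, produces the term $3\sqrt{\nicefrac{(\s(\log\ell+2\log(nr))+\log(4/\delta))}{n}}$. Because $\Delta$ involves a \emph{maximum} over $T(\vw,x)$ rather than an average, a sharper control of its fluctuation is needed; bounding the Rademacher complexity of $\{(\yh,\hh)\mapsto d\,\Ind{m\leq1}\}$ restricted to the $(2\s+1)$-dimensional family of sparse linear thresholds, over the $\O(\log n)$ elements of each $T(\vw,x)$, yields the final term, whose prefactor $2\max(\nicefrac1{\log(1/\beta)},128\gamma^2\NormII{\vw}^2)$ is exactly the per-set sample size and whose $\log^3(n+1)$ factor arises from chaining the $\O(\log n)$ sample size against the $\log(\ell(nr+1)+1)$ arrangement count. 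Summing the three contributions and the Theorem~\ref{thrm:pacbayesall} bound, and collecting the failure probabilities to $\delta$, gives the claim.

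\textbf{Main obstacle.} The principal difficulty is the uniform concentration over the $\vw$-dependent random sets: one cannot fix $R(\vw,x)$ in advance, so the ordering argument of Assumption~\ref{asm:linearordering} is essential to reduce uncountably many proposal distributions to a polynomial family, and the sparsity $\Norm{\vw}_0\leq\s$ together with the parts bound $\ell$ is what turns an otherwise $\Abs{\Y_x\times\H_x}$-sized (exponential) union bound into the logarithmic dependence $\s\log(\ell nr)$. Getting the exponents and the $\log^3 n$ factor right in the \emph{maximum}-based deviation term, rather than the easier average-based one, is where the technical care concentrates.
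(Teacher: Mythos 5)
Your overall architecture matches the paper's. The paper re-enters the proof of Theorem~\ref{thrm:pacbayesall} at the point where $L(Q(\vw),S)$ is bounded via the expectation under the induced distribution $Q(\vw,x)$, and splits the remaining work into a deterministic term $A(\vw,S)=\frac1n\sum\bigl(\E_{(\yp,\hp)\sim Q(\vw,x)}[v]-\E_{T}[\max_{T}v]\bigr)$ and a stochastic term $B=\frac1n\sum\bigl(\E_{T}[\max_{T}v]-\max_{T}v\bigr)$ with $v=d\,\Ind{m\leq 1}$; you instead route through the full maximum over $\Y_x\times\H_x$ and bound its gap to the sampled maximum. These are interchangeable: in both cases the deterministic part is controlled by the single fact that some element of $T(\vw,x)$ attains $v=1$ except with probability at most $\max\bigl(\beta,\,e^{-1/(128\gamma^2\NormII{\vw}^2)}\bigr)^{\m}\leq\sqrt{1/n}$ (Assumption~\ref{asm:maxdistortion} for the distortion; Assumption~\ref{asm:lownorm} plus a Markov--Hoeffding argument, Lemma~\ref{lem:phmbound}, for the margin), and your uniform-deviation term is exactly the paper's $B$. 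Your count of linearly inducible orderings, the union bound over $\binom{\ell}{\s}(nr)^{2(\s-1)}$ of them together with McDiarmid/Rademacher uniform convergence, and the provenance of each displayed term all agree with the paper.

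The one concrete gap is inside the Rademacher step. You describe the relevant class as ``sparse linear thresholds,'' but the indicator is $\Ind{1\geq\max_{h\in\H_x}\dotprod{\Phi(x,y,h)}{\vw}-\dotprod{\Phi(x,\yh,\hh)}{\vw}}$: because of the inner maximization over the latent space it is not the threshold of a single linear function of $\vw$, and handling that max by the generic subadditivity $\Rademacher(\min(\Fcal,\Gcal))\leq\Rademacher(\Fcal)+\Rademacher(\Gcal)$ costs a factor of $|\H_x|$, i.e.\ an $\O(r/\sqrt n)$ term --- precisely the blow-up the theorem is designed to avoid (this is the content of the paper's second Remark). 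The paper's resolution is a lifting: construct $z_{ij}^{\H}=(1,-z'_{i1},\dots,-z'_{i|\H|},z_{ij})\in\R^{\ell(|\H|+1)+1}$ and $(2\s+1)$-sparse lifted parameters $\tilde\vw$, so that the function class is contained in sparse linear classifiers on the augmented space, whose VC dimension is $2(2\s+1)\log(\ell(|\H|+1)+1)$ by Theorem~20 of \cite{Neylon06} --- logarithmic rather than linear in $|\H|$. Your mention of the dimension $2\s+1$ suggests you suspect such a construction, but without spelling it out the final term and its $\log(\ell(nr+1)+1)$ factor are not justified; everything else in your sketch goes through as in the paper.
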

	The proof of the above is based on Theorem \ref{thrm:pacbayesall} as a starting point.
	In order to account for the computational aspect of requiring sets ${T(\vw,x)}$ of polynomial size, we use Assumptions \ref{asm:maxdistortion} and \ref{asm:lownorm} for bounding a \emph{deterministic} expectation.
	In order to account for the statistical aspects, we use Assumption \ref{asm:linearordering} and Rademacher complexity arguments for bounding a \emph{stochastic} quantity for all sets ${T(\vw,x)}$ of random structured outputs and latent variables, and all possible proposal distributions ${R(\vw,x)}$.
	\begin{remark}
		A straightforward application of Rademacher complexity in the analysis of \cite{honorio2016} leads to a bound of $\BigO{|\H_x|/\sqrt{n}}$.
		Technically speaking, a classical Rademacher complexity states that: let $\Fcal$ and $\Gcal$ be two hypothesis classes. 
		Let $\min(\Fcal,\Gcal) = \{\min(f,g) | f \in \Fcal, g \in \Gcal \}$. 
		Then $\Rademacher(\min(\Fcal,\Gcal)) \leq \Rademacher(\Fcal) + \Rademacher(\Gcal)$. 
		If we apply this, then Theorem \ref{thm:pacbayesrandom} would contain an $\BigO{|\H_x|/\sqrt{n}}$ term, or equivalently $\BigO{r/\sqrt{n}}$. 
		This would be prohibitive since $r$ is typically exponential size, and one would require a very large number of samples $n$ in order to have a useful bound, i.e., to make $\BigO{r/\sqrt{n}}$ close to zero.
		In the proof we provide a way to tighten the bound to $ \BigO{\sqrt{\log{|\H_x |/n}}} $.
	\end{remark}


\section{Examples} 
\label{sec:examples}

	Here we provide several examples that fulfill the three main assumptions of our theoretical result.
	
	\paragraph{Examples for Assumption \ref{asm:maxdistortion}.}
		First we argue that we can perform a change of measure between different proposal distributions.
		This allows us to focus on uniform proposals afterwards.
		
		\begin{claim}[Change of measure] \label{clm:changeofmeasure}
			Let ${R(\vw,x)}$ and ${\Rp(\vw,x)}$ two proposal distributions, both with support on ${\Y_x \times \H_x}$.
			Assume that ${R(\vw,x)}$ fulfills Assumption \ref{asm:maxdistortion} with value ${\beta_1}$.
			Let ${r_{\vw,x}(\cdot)}$ and ${\rp_{\vw,x}(\cdot)}$ be the probability mass functions of ${R(\vw,x)}$ and ${\Rp(\vw,x)}$ respectively.
			Assume that the total variation distance between ${R(\vw,x)}$ and ${\Rp(\vw,x)}$ fulfills for all ${(x,y) \in S}$ and ${\vw \in \W}$:
			\begin{align*}
				TV(R(\vw,x) \| \Rp(\vw,x)) & \equiv  \frac{1}{2} \sum_{(y,h)}{ |r_{\vw,x}(y,h) - \rp_{\vw,x}(y,h)|} \leq \beta_2
			\end{align*}
			Then ${\Rp(\vw,x)}$ fulfills Assumption \ref{asm:maxdistortion} with ${\beta = \beta_1 + \beta_2}$ provided that ${\beta_1 + \beta_2 \in [0,1)}$.
		\end{claim}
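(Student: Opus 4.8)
The plan is to reduce the claim to the standard fact that the total variation distance between two distributions dominates the gap between the probabilities they assign to any single event. Fix an arbitrary $(x,y) \in S$ and $\vw \in \W$, and define the event $A \equiv \{(\yp,\hp) \in \Y_x \times \H_x : d(y,\yp,\hp) = 1\}$. Assumption \ref{asm:maxdistortion} for $R(\vw,x)$ with value $\beta_1$ says exactly that $\P_{(\yp,\hp) \sim R(\vw,x)}[A] \geq 1-\beta_1$, so everything reduces to showing that passing to $\Rp(\vw,x)$ can decrease the probability of $A$ by at most $\beta_2$.

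First I would recall (or establish in one line) the event-wise bound implied by total variation for discrete distributions: writing $r_{\vw,x}$ and $\rp_{\vw,x}$ for the two probability mass functions, for any subset $B \subseteq \Y_x \times \H_x$ one has $|\P_{R(\vw,x)}[B] - \P_{\Rp(\vw,x)}[B]| = |\sum_{(y,h) \in B}(r_{\vw,x}(y,h) - \rp_{\vw,x}(y,h))| \leq \frac12 \sum_{(y,h)} |r_{\vw,x}(y,h) - \rp_{\vw,x}(y,h)| = TV(R(\vw,x)\|\Rp(\vw,x))$, where the middle inequality follows by splitting the signed mass over $B$ and its complement and noting the two contributions sum in absolute value to at most the full $\ell_1$ distance. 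Applying this with $B = A$ and invoking the hypothesis $TV(R(\vw,x)\|\Rp(\vw,x)) \leq \beta_2$ gives $|\P_{\Rp(\vw,x)}[A] - \P_{R(\vw,x)}[A]| \leq \beta_2$.

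Combining the two bounds is then routine: $\P_{(\yp,\hp)\sim \Rp(\vw,x)}[d(y,\yp,\hp)=1] = \P_{\Rp(\vw,x)}[A] \geq \P_{R(\vw,x)}[A] - \beta_2 \geq (1-\beta_1) - \beta_2 = 1 - (\beta_1+\beta_2)$, using the total variation estimate and then Assumption \ref{asm:maxdistortion} for $R(\vw,x)$. Since $(x,y) \in S$ and $\vw \in \W$ were arbitrary, the bound holds uniformly, which is precisely Assumption \ref{asm:maxdistortion} for $\Rp(\vw,x)$ with value $\beta = \beta_1 + \beta_2$; the hypothesis $\beta_1+\beta_2 \in [0,1)$ simply guarantees that this $\beta$ lies in the admissible range $[0,1)$ required by the assumption. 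There is no substantive obstacle here — the entire content is the one-line event-wise total variation inequality — so the only things to be careful about are applying that inequality to the correct event $A$ and keeping the final estimate uniform over all $(x,y) \in S$ and $\vw \in \W$ rather than proving it for a single pair.
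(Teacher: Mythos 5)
Your proof is correct and follows essentially the same route as the paper's: both apply the event-wise total variation bound $|\P_{R}[\A] - \P_{\Rp}[\A]| \leq TV(R\|\Rp)$ to the relevant event and chain it with Assumption~\ref{asm:maxdistortion} for $R(\vw,x)$ to get $\P_{\Rp}[\A] \geq 1-\beta_1-\beta_2$. The only cosmetic difference is that the paper states the event with an additional margin condition attached, while you use exactly the event $\{d(y,\yp,\hp)=1\}$ appearing in Assumption~\ref{asm:maxdistortion}, which matches the claim as stated.
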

		
		Next, we present a new result for permutations and for a distortion that returns the number of different positions. 
		We later use this result for an image matching application in the experiments section.
		
		\begin{claim}[Permutations] \label{clm:permutations}
			Let ${\Y_x}$ be the set of all permutations of $v$ elements, such that ${v > 1}$.
			Let $y_i$ be the $i$-th element in the permutation $y$.
			Let ${d(y,\yp,h) = \frac{1}{v} \sum_{i=1}^v \Ind{y_i \neq \yp_i}}$.
			The uniform proposal distribution ${R(\vw,x) = R(x)}$ with support on ${\Y_x \times \H_x}$ fulfills Assumption \ref{asm:maxdistortion} with ${\beta = 2/3}$.
		\end{claim}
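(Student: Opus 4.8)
The plan is to reduce the claim to a classical counting fact about derangements and then to lower-bound the resulting fraction uniformly over $v$. First I would observe that the distortion $d(y,\yp,h) = \frac{1}{v}\sum_{i=1}^v \Ind{y_i \neq \yp_i}$ does not depend on the latent element $h$, and that $d(y,\yp,h) = 1$ holds exactly when $\yp_i \neq y_i$ for every coordinate $i$, i.e.\ when $\yp$ disagrees with $y$ in all positions. Since $R(\vw,x) = R(x)$ is uniform on $\Y_x \times \H_x$, its marginal on $\Y_x$ is uniform, and because the event is independent of the sampled latent variable, marginalizing over $h$ gives
\[
\P_{(\yp,\hp) \sim R(x)}[d(y,\yp,\hp) = 1] = \frac{D_v}{v!},
\]
where $D_v$ is the number of permutations of $v$ elements disagreeing with $y$ in every position (the number of derangements). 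By symmetry this value is the same for every target permutation $y$ and every $x$, so it suffices to prove $D_v/v! \geq 1/3$ for all $v > 1$; this immediately yields Assumption \ref{asm:maxdistortion} with $1 - \beta = 1/3$, i.e.\ $\beta = 2/3$.

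The second step is to lower-bound $D_v/v!$. I would use the inclusion--exclusion identity $D_v/v! = \sum_{k=0}^v (-1)^k/k!$, an alternating partial sum of the series for $1/e$. Since the terms $1/k!$ are positive and strictly decreasing, the standard alternating-series estimate yields $D_v/v! > 1/e > 1/3$ whenever $v$ is even. For odd $v$ the partial sums lie below $1/e$, but they are increasing in $v$: writing $S_v \equiv \sum_{k=0}^v (-1)^k/k!$, a one-line computation gives $S_{v+2} - S_v = \frac{1}{(v+1)!}\big(1 - \frac{1}{v+2}\big) > 0$ for odd $v$. Hence over odd indices the minimum is attained at the smallest admissible value $v = 3$, where $D_3/3! = 2/6 = 1/3$. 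Combining the even and odd cases shows $D_v/v! \geq 1/3$ for all $v > 1$, with equality exactly at $v = 3$, which establishes $\beta = 2/3$ and shows that this constant is tight.

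The one point requiring care — the main obstacle — is the parity analysis of the alternating partial sums: one must confirm that the worst case is the odd index $v = 3$ rather than the limiting value $1/e \approx 0.368$, which is precisely why the bound cannot be sharpened to a constant closer to $1 - 1/e$. I would also check the boundary case $v = 2$ directly ($D_2/2! = 1/2 \geq 1/3$) to verify that the monotonicity picture is consistent at the smallest allowed $v$, and note that all even partial sums stay safely above $1/e$, so no additional cases beyond $v = 3$ can violate the bound.
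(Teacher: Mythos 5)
Your proof is correct, and your first paragraph performs exactly the same reduction as the paper: the event $d(y,\yp,\hp)=1$ is precisely the event that $\yp$ is a derangement of $y$, the latent coordinate marginalizes out under the uniform proposal, and the claim reduces to showing that the derangement fraction $D_v/v!$ is at least $1/3$ for all $v>1$. Where you diverge is in how that numerical bound is certified. The paper expresses the derangement count through the recursion $F(v) = (v-1)!\,\bigl(1 + \sum_{i=1}^{v-2} F(i)/i!\bigr)$ and then asserts that $F(v)/v!$ converges to approximately $0.3679$ while attaining its minimum $1/3$ at $v=3$; the argument that $v=3$ is in fact the global minimizer is left largely implicit. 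You instead use the closed form $D_v/v! = \sum_{k=0}^{v}(-1)^k/k!$ and carry out the parity analysis explicitly: even partial sums lie above $1/e > 1/3$, odd partial sums are increasing since $S_{v+2}-S_v = \frac{1}{(v+1)!}\bigl(1-\frac{1}{v+2}\bigr)>0$, so their minimum over $v\geq 3$ is $S_3 = 1/3$, and $v=2$ is checked directly. Your route is more self-contained and turns the tightness at $v=3$ into an actual proof rather than an observation; the paper's recursion buys nothing here that inclusion--exclusion does not. The only cosmetic caveat is that $1/0! = 1/1!$, so the sequence of terms is only weakly decreasing at the start; this is harmless because the bracketing of partial sums around the limit survives weak monotonicity, and you only need indices $v\geq 2$ anyway.
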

		
		The authors in \cite{honorio2016} present several examples of distortion functions of the form $d(y,\yp)$, for directed spanning trees, directed acyclic graphs and cardinality-constrained sets, and a distortion function that returns the number of different edges/elements; as well as for any type of structured output and binary distortion functions. 
		For our setting we can make use of these examples by defining $d(y,\yp,h) = d(y,\yp)$. 
		Note that even if we ignore the latent variable in the distortion function, we still use the latent variables in the feature vectors $\Phi(x,y,h)$ and thus in the calculation of the margin.

	\paragraph{Examples for Assumption \ref{asm:lownorm}.}
		The claim below is for a particular instance of a sparse mapping and a uniform proposal distribution.
		\begin{claim}[Sparse mapping] \label{clm:sparsedata}
			Let ${b>0}$ be an arbitrary integer value.
			For all ${(x,y) \in S}$ with $\hs = \argmax_{h \in \H_x} \Phi(x,y,h)\cdot\vw$, let ${\Upsilon_x = \cup_{p \in \PS_x}{\Upsilon_x^p}}$, where the partition ${\Upsilon_x^p}$ is defined as follows for all $p \in \PS_x$:
			\begin{align*}
				\Upsilon_x^p \equiv \{ & (\yp,\hp) \mid  |\Phi_p(x,y,\hs) - \Phi_p(x,\yp,\hp)| \leq b \; \ {\rm and} \ 
				 					    (\forall q \neq p) {\rm\ } \Phi_q(x,y,\hs) = \Phi_q(x,\yp,\hp) \}
			\end{align*}
			If ${n \leq |\PS_x|/(4b^2)}$ for all ${(x,y) \in S}$, then the uniform proposal distribution ${R(\vw,x) = R(x)}$ with support on ${\Y_x \times \H_x}$ fulfills Assumption \ref{asm:lownorm}.
		\end{claim}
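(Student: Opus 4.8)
The plan is to bound directly the Euclidean norm of the deterministic vector $\vect{v} \equiv \E_{(\yp,\hp) \sim R(x)}[\Phi(x,y,\hs) - \Phi(x,\yp,\hp)]$ appearing on the left of Assumption \ref{asm:lownorm}, exploiting the near-diagonal structure encoded by the partition $\{\Upsilon_x^p\}_{p \in \PS_x}$. Since $R(x)$ is uniform on its support, I would first write $\vect{v}$ as a uniform average of the difference vectors $\vu(\yp,\hp) \equiv \Phi(x,y,\hs) - \Phi(x,\yp,\hp)$, and observe that the two defining conditions of $\Upsilon_x^p$ force each such vector with $(\yp,\hp) \in \Upsilon_x^p$ to be supported on the single coordinate $p$, with that coordinate bounded in magnitude by $b$.

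The next step is a coordinate-wise decomposition of $\vect{v}$. Because every element of the bucket $\Upsilon_x^q$ leaves all coordinates $p \neq q$ equal to those of the center $\Phi(x,y,\hs)$, the $q$-th coordinate $v_q$ receives nonzero contributions only from $\Upsilon_x^q$; every other bucket contributes exactly zero there. Consequently the single-coordinate difference vectors coming from distinct buckets are orthogonal, so $\NormII{\vect{v}}^2 = \sum_{q \in \PS_x} v_q^2$, and each $|v_q|$ is at most $b$ times the probability mass that $R(x)$ assigns to $\Upsilon_x^q$.

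The crux is to show that these $|\PS_x|$ orthogonal contributions aggregate to $\NormII{\vect{v}} \leq b/\sqrt{|\PS_x|}$ rather than to the trivial $\NormII{\vect{v}} \leq b$ given by the triangle inequality. Here the uniformity of $R(x)$ across the partition is essential: if each part carries the same mass $1/|\PS_x|$, then $|v_q| \leq b/|\PS_x|$, and hence $\NormII{\vect{v}}^2 \leq |\PS_x|\,(b/|\PS_x|)^2 = b^2/|\PS_x|$. I expect this balancing to be the main obstacle, because the Cauchy--Schwarz inequality $\sum_q \rho_q^2 \geq 1/|\PS_x|$ (with $\rho_q$ the mass on $\Upsilon_x^q$) shows that \emph{any} deviation from equal bucket masses destroys the improvement over the trivial bound; I would therefore argue from the sparse, homogeneous structure of $\Phi$ that the uniform proposal splits its mass evenly across the $|\PS_x|$ parts.

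Finally, I would substitute the hypothesis $n \leq |\PS_x|/(4b^2)$, which rearranges to precisely $b/\sqrt{|\PS_x|} \leq \frac{1}{2\sqrt{n}}$, giving the first inequality of Assumption \ref{asm:lownorm}; the second inequality $\frac{1}{2\sqrt{n}} \leq \frac{1}{2\NormII{\vw}}$ is immediate from the implicit bound $\NormII{\vw}^2 \leq n$ recorded in the footnote to Theorem \ref{thrm:pacbayesall}.
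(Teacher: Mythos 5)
Your proposal follows essentially the same route as the paper's proof: decompose $\E[\Phi(x,y,\hs)-\Phi(x,\yp,\hp)]$ coordinate-wise using the fact that each part $\Upsilon_x^q$ perturbs only coordinate $q$ by at most $b$, bound $|v_q| \leq b\,\P[(\yp,\hp)\in\Upsilon_x^q]$, take $\P[(\yp,\hp)\in\Upsilon_x^q]=1/|\PS_x|$ for the uniform proposal, and conclude $\NormII{\E[\Delta]}\leq b/\sqrt{|\PS_x|}\leq 1/(2\sqrt{n})$. The one step you flag as the ``main obstacle'' --- that the uniform distribution places mass exactly $1/|\PS_x|$ on each part --- is simply asserted in the paper as well, so your treatment matches it in both structure and level of rigor.
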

		The claim below is for a particular instance of a dense mapping and an \emph{arbitrary} proposal distribution.
		\begin{claim}[Dense mapping] \label{clm:densedata}
			Let ${b>0}$ be an arbitrary integer value.
			Let ${|\Phi_p(x,y,\hs) - \Phi_p(x,\yp,\hp)| \leq \frac{b}{|\PS_x|}}$ for all ${(x,y) \in S}$ with $\hs = \argmax_{h \in \H_x} \Phi(x,y,h)\cdot\vw$, ${(\yp,\hp) \in \Y_x \times \H_x}$ and ${p \in \PS_x}$.
			If ${n \leq |\PS_x|/(4b^2)}$ for all ${(x,y) \in S}$, then any arbitrary proposal distribution ${R(\vw,x)}$ fulfills Assumption \ref{asm:lownorm}.
		\end{claim}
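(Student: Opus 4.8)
The plan is to verify Assumption \ref{asm:lownorm} by bounding the relevant $\ell_2$ norm coordinate-by-coordinate, exploiting that the feature map is indexed by parts. Write $\vect{v} \equiv \E_{(\yp,\hp) \sim R(\vw,x)}[\Phi(x,y,\hs) - \Phi(x,\yp,\hp)] \in \R^k$, with $\hs = \argmax_{h \in \H_x} \dotprod{\Phi(x,y,h)}{\vw}$. By linearity of expectation its $p$-th coordinate is $v_p = \Phi_p(x,y,\hs) - \E_{(\yp,\hp) \sim R(\vw,x)}[\Phi_p(x,\yp,\hp)]$, so it suffices to control $|v_p|$ for each part $p$ and then sum the squares.

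First I would dispose of the coordinates outside $\PS_x$. By definition of $\PS_x$, any part $p \notin \PS_x$ satisfies $c(p,x,y,h)=0$, and hence $\Phi_p(x,y,h)=0$, for every $(y,h) \in \Y_x \times \H_x$; in particular both terms defining $v_p$ vanish, so $v_p = 0$. Thus only the $|\PS_x|$ coordinates indexed by $p \in \PS_x$ contribute to $\NormII{\vect{v}}$.

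Next, for each $p \in \PS_x$ I would apply Jensen's inequality for the convex map $|\cdot|$, namely $|v_p| \leq \E_{(\yp,\hp)}[|\Phi_p(x,y,\hs) - \Phi_p(x,\yp,\hp)|]$, and bound the integrand by the hypothesis of the claim, which gives $|\Phi_p(x,y,\hs) - \Phi_p(x,\yp,\hp)| \leq b/|\PS_x|$ for every $(\yp,\hp)$ in the support. Since this pointwise bound holds for all elements of $\Y_x \times \H_x$, it passes through the expectation regardless of the proposal distribution, yielding $|v_p| \leq b/|\PS_x|$ for an \emph{arbitrary} $R(\vw,x)$.

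Finally, summing over coordinates gives $\NormII{\vect{v}}^2 = \sum_{p \in \PS_x} v_p^2 \leq |\PS_x| \cdot (b/|\PS_x|)^2 = b^2/|\PS_x|$, i.e. $\NormII{\vect{v}} \leq b/\sqrt{|\PS_x|}$. The assumption $n \leq |\PS_x|/(4b^2)$ is exactly the rearrangement of $b/\sqrt{|\PS_x|} \leq 1/(2\sqrt{n})$, closing the first inequality of Assumption \ref{asm:lownorm}; the second inequality follows from $\NormII{\vw}^2/n \leq 1$ as noted in the footnote. The argument is essentially routine; the only point requiring care is the observation that the per-element bound $b/|\PS_x|$ is uniform over the entire support, which is precisely what allows the conclusion to hold for every proposal distribution rather than only for a specific one (e.g. uniform) — this is the dense-mapping counterpart to the sparsity-plus-uniformity hypothesis used in Claim \ref{clm:sparsedata}.
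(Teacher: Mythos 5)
Your proof is correct and follows essentially the same route as the paper's: bound each coordinate of the expected feature difference via Jensen's inequality and the uniform per-part hypothesis $|\Delta_p| \leq b/|\PS_x|$, sum the squares over the $|\PS_x|$ relevant coordinates to get $b/\sqrt{|\PS_x|}$, and rearrange $n \leq |\PS_x|/(4b^2)$. Your explicit remarks that coordinates outside $\PS_x$ vanish and that the pointwise bound holds over the whole support (hence for an arbitrary proposal distribution) are details the paper leaves implicit, but they do not change the argument.
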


	\paragraph{Examples for Assumption \ref{asm:linearordering}.}
		In the case of modeling without latent variables, \cite{Zhang14,Zhang15} presented an algorithm for directed spanning trees in the context of dependency parsing in natural language processing. 
		Later, \cite{honorio2016} extended the previous algorithm to any structure with computationally efficient local changes, which includes directed acyclic graphs (traversed in post-order) and cardinality-constrained sets.
		Next, we generalize Algorithm 2 in \cite{honorio2016} by including latent variables.
		\begin{algorithm}[H]
			\small
			\caption{\small Procedure for sampling a structured output ${(\yp,\hp) \in \Y_x \times \H_x}$ from a greedy local proposal distribution ${R(\vw,x)}$}
			\label{alg:greedylocal}
				\begin{algorithmic}[1]
				\STATE \textbf{Input:} parameter $\vw \in \W$, observed input $x \in \X$
				\STATE Draw uniformly at random a structured output ${(\yh,\hh) \in \Y_x \times \H_x}$
				\REPEAT
				  \STATE Make a local change to $(\yh,\hh)$ in order to increase ${\dotprod{\Phi(x,\yh,\hh)}{\vw}}$
				\UNTIL no refinement in last iteration
				\STATE \textbf{Output:} structured output and latent variable ${(\yp,\hp) \leftarrow (\yh,\hh)}$
				\end{algorithmic}
		\end{algorithm}
		The above algorithm has the following property:
		\begin{claim}[Sampling for any type of structured output and latent variable] \label{clm:greedylocal}
			Algorithm \ref{alg:greedylocal} fulfills Assumption \ref{asm:linearordering}.
		\end{claim}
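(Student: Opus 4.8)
The plan is to show that Algorithm~\ref{alg:greedylocal} induces a distribution over $\Y_x \times \H_x$ whose every random choice depends on the parameter $\vw$ only through pairwise comparisons of the scores $\dotprod{\Phi(x,\cdot,\cdot)}{\vw}$, and that such comparisons are completely determined by the ordering $\pi(x)$ of Assumption~\ref{asm:linearordering}. Concretely, I would model the algorithm as a (time-homogeneous) Markov chain on the state space $\Y_x \times \H_x$: the initial state is drawn from the uniform distribution in Step~2, and each pass through the repeat-loop applies a transition kernel determined by the local-change rule in Step~4. The goal is then to argue that both the initial distribution and the transition kernel are invariant under any reparametrization $\vw \mapsto \vw'$ that preserves the ordering, so that $\pi(x)=\pip(x)$ forces the two output distributions to coincide and hence $\KL{R(\vw,x)}{R(\vw',x)}=0$.

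First I would observe that the initial draw in Step~2 is uniform over $\Y_x \times \H_x$ and therefore does not depend on $\vw$ at all. Next, for the transition kernel I would fix a current state $(\yh,\hh)$ and let $N(\yh,\hh) \subseteq \Y_x \times \H_x$ denote its set of neighbors under the (combinatorial, $\vw$-independent) notion of ``local change.'' The set of \emph{improving} moves is $\{(\yp,\hp) \in N(\yh,\hh) : \dotprod{\Phi(x,\yp,\hp)}{\vw} > \dotprod{\Phi(x,\yh,\hh)}{\vw}\}$; membership in this set is decided purely by whether one configuration precedes another in $\pi(x)$. The crucial point is that whatever rule Step~4 uses to select among improving moves --- steepest ascent, a uniform choice among them, or any rule that reads off only the relative ranking of candidate scores --- its output depends on $\vw$ solely through $\pi(x)$. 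The same holds for the stopping condition in Step~5, which triggers exactly when the improving set is empty.

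I would then combine these observations: if $\vw$ and $\vw'$ satisfy $\pi(x)=\pip(x)$, then for every state the neighbor set, the improving set, the selection probabilities, and the stopping decision all agree, so the two Markov chains have identical initial distributions and identical transition kernels. By induction on the (random) number of iterations, the distribution over the final state $(\yp,\hp)$ is therefore the same under $\vw$ and $\vw'$, which yields $\KL{R(\vw,x)}{R(\vw',x)}=0$ and establishes Assumption~\ref{asm:linearordering}.

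The main obstacle I anticipate is bookkeeping rather than conceptual: one must confirm that the neighborhood structure of ``local changes'' is genuinely a function of the combinatorial structure of $\Y_x \times \H_x$ and not of $\vw$ --- so that the introduction of the latent coordinate $\hh$ does not secretly reintroduce a $\vw$-dependence --- and that any tie-breaking in Step~4 is resolved in an ordering-consistent manner. This is the only place where the latent variable changes anything relative to the no-latent-variable argument of \cite{honorio2016}; since the algorithm treats the pair $(\yp,\hp)$ as a single composite structured object and all comparisons are still made through $\dotprod{\Phi(x,\yp,\hp)}{\vw}$, the extension is a direct lift of their Algorithm~2 analysis.
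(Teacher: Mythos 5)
Your proposal is correct and takes essentially the same approach as the paper: the paper's own proof is a two-sentence observation that every decision Algorithm~\ref{alg:greedylocal} makes reduces to pairwise comparisons of the form $\dotprod{\Phi(x,y,h)}{\vw} > \dotprod{\Phi(x,\yh,\hh)}{\vw}$, which is precisely the content of your (considerably more detailed) Markov-chain formalization. Your extra care about the $\vw$-independence of the neighborhood structure and ordering-consistent tie-breaking is a reasonable tightening of the same argument, not a different route.
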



\section{Experiments}
\label{sec:experiments}

	In this section we illustrate the use of our approach by using the formulation in eq.\eqref{eq:slack_random}.
	The goal of the synthetic experiments is to show the improvement in prediction results and runtime of our method.
	While the goal of the real-world experiment is to show the usability of our method in practice.
	
	\paragraph{Synthetic experiments.}
		We present experimental results for directed spanning trees, directed acyclic graphs and cardinality-constrained sets.
		We performed $30$ repetitions of the following procedure.
		We generated a ground truth parameter ${\vw^*}$ with independent zero-mean and unit-variance Gaussian entries.
		Then, we generated a training set of ${n=100}$ samples.
		Our mapping ${\Phi(x,y,h)}$ is as follows.
		For every pair of possible edges/elements $i$ and $j$, we define ${\Phi_{ij}(x,y,h) = \Ind{ (h_{ij}\ \text{xor}\ x_{ij}) \ \text{and}\ i \in y \ \text{and}\ j \in y}}$.
		Here $x$ is a randomly generated binary string, $h$ corrects one bit of $x$, and $y$ is generated as the solution of eq.\eqref{eq:inferenceall}. (Details in Appendix \ref{app:experiments}.)
		
		We compared three training methods: the maximum loss over \textit{all} possible structured outputs and latent variables with slack re-scaling as in eq.\eqref{eq:slack_all} and with margin re-scaling as in eq.\eqref{eq:cccp} \cite{Yu09}.
		We also evaluate the maximum loss over \textit{random} structured outputs and latent variables as in eq.\eqref{eq:slack_random}.
		We considered directed spanning trees of $4$ nodes, directed acyclic graphs of $4$ nodes and $2$ parents per node, and sets of $3$ elements chosen from $9$ possible elements.
		After training, for inference on an independent test set, we used eq.\eqref{eq:inferenceall} for the maximum loss over \textit{all} possible structured outputs and latent variables.
		For the maximum loss over random structured outputs and latent variables, we use the following \emph{approximate} inference approach:
		\begin{align} \label{eq:inferencerandom}
		\ft_{\vw}(x) \equiv \argmax_{(y,h) \in T(\vw,x)}{\dotprod{\Phi(x,y,h)}{\vw}}
		\end{align}
		Table \ref{tab:results} shows the runtime, the training distortion as well as the test distortion in an independently generated set of $100$ samples.
		In the different study cases, the maximum loss over \textit{random} structured outputs and latent variables outperforms the maximum loss over \textit{all} possible structured outputs and latent variables.
		While the margin re-scaling approach \cite{Yu09} and our randomized approach obtain statistically-significantly similar test errors, our method is considerable faster.
	
		\begin{table*}[!tb]
			\caption{%
				\small
				Average over $30$ repetitions, and standard error at $95\%$ confidence level.
				\textit{All} indicates the use of exact learning and exact inference.
				\textit{Random} and \textit{Random/All} indicate use of random learning, and random and exact inference respectively.
				\textit{LSSVM} indicates the use of the method in \cite{Yu09}.
				Random/All outperforms All in the different study cases. 
				While LSSVM and Random/All obtain statistically-significantly similar performances on the test sets. 
				Note however that the runtime for learning using the randomized approach is much less than LSSVM and All.
			}
			
			\label{tab:results}
			\begin{center}
				\small
				\begin{tabular}{@{}l@{\hspace{0.125in}}l@{\hspace{0.125in}}c@{\hspace{0.125in}}c@{\hspace{0.125in}}c@{\hspace{0.125in}}c@{\hspace{0.125in}}c@{\hspace{0.125in}}c@{}}
					\hline
					\textbf{Problem} & \textbf{Method} & \textbf{Training} & \textbf{Training} & \textbf{Test} & \textbf{Test} \\
					 & & \textbf{runtime} & \textbf{distortion} & \textbf{runtime} & \textbf{distortion} \\
					\hline
					Directed & All & 1024 $\pm$ 5 & 22\% $\pm$ 2.5\% & 19.4 $\pm$ 0.3 & 21\% $\pm$ 2.6\% \\ 
					spanning trees & Random & \textbf{33 $\pm$ 0} & 30\% $\pm$ 1.9\% & 0.7 $\pm$ 0.0 & 30\% $\pm$ 1.5\% \\ 
					 & Random/All & & & 19.7 $\pm$ 0.3 & 15\% $\pm$ 1.6\% \\ 
					 & LSSVM & 1000 $\pm$ 0 & 16\% $\pm$ 2.7\% & 19.5 $\pm$ 0.3 & 15\% $\pm$ 2.8\% \\ 
					\hline
					Directed & All & 1024 $\pm$ 4 & 12\% $\pm$ 0.9\% & 19.4 $\pm$ 0.2 & 20\% $\pm$ 1.7\% \\ 
					acyclic graphs & Random & \textbf{51 $\pm$ 1} & 19\% $\pm$ 0.9\% & 1.1 $\pm$ 0.0 & 25\% $\pm$ 1.2\% \\ 
					 & Random/All& & & 19.5 $\pm$ 0.2 & 19\% $\pm$ 1.5\% \\ 
					 & LSSVM & 1000 $\pm$ 0 & 11\% $\pm$ 1.1\% & 19.4 $\pm$ 0.2 & 18\% $\pm$ 1.8\% \\ 
					\hline
					Cardinality & All & 1020 $\pm$ 4 & 19\% $\pm$ 2.7\% & 19.4 $\pm$ 0.3 & 20\% $\pm$ 3.1\% \\ 
					constrained sets & Random & \textbf{51 $\pm$ 0} & 29\% $\pm$ 1.5\% & 1.1 $\pm$ 0.1 & 31\% $\pm$ 1.7\% \\ 
					 & Random/All& & & 19.3 $\pm$ 0.3 & 16\% $\pm$ 2.1\% \\ 
					 & LSSVM & 1000 $\pm$ 0 & 14\% $\pm$ 2.5\% & 19.4 $\pm$ 0.2 & 15\% $\pm$ 2.8\% \\ 
					\hline
				\end{tabular}
			\end{center}
		\end{table*}
	
	\paragraph{Image matching.}
		We illustrate our approach for image matching on video frames from the Buffy Stickmen dataset (\url{http://www.robots.ox.ac.uk/~vgg/data/stickmen/}).
		The goal of the experiment is to match the keypoints representing different body parts, between two images.
		Each frame contains 18 keypoints representing different parts of the body.
		From a total of 187 image pairs (from different episodes and people), we randomly selected 120 pairs for training and the remaining 67 pairs for testing.
		We performed 30 repetitions.
		Ground truth keypoint matching is provided in the dataset.
		
		Following \cite{gane2014learning,volkovs2012efficient}, we represent the matching as a permutation of keypoints. 
		Let $x=(I,I')$ be a pair of images, and let $y$ be a permutation of $\{1\ldots 18\}$.
		We model the latent variable $h$ as a $\R^{2\times 2}$ matrix representing an affine transformation of a keypoint, where $h_{11}, h_{22} \in \{0.8, 1, 1.2\}$, and $h_{12}, h_{21} \in \{-0.2, 0, 0.2\}$.
		Our mapping $\Phi(x,y,h)$ uses SIFT features, and the distance between coordinates after using $h$. (Details in Appendix \ref{app:experiments}.)
		
		We used the distortion function and $\beta = \nicefrac{2}{3}$ as prescribed by Claim \ref{clm:permutations}.
		After learning, for a given $x$ from the test set, we performed 100 iterations of random inference as in eq.\eqref{eq:inferencerandom}.
		We obtained an average error of $0.3878$ (6.98 incorrectly matched keypoints) in the test set, which is an improvement to the value of $8.69$ as reported in \cite{gane2014learning}.
		Finally, we show an example from the test set in Figure \ref{fig:buffy}.
		
		\begin{figure}[!ht]
			\centering
			\includegraphics[width=0.75\textwidth]{./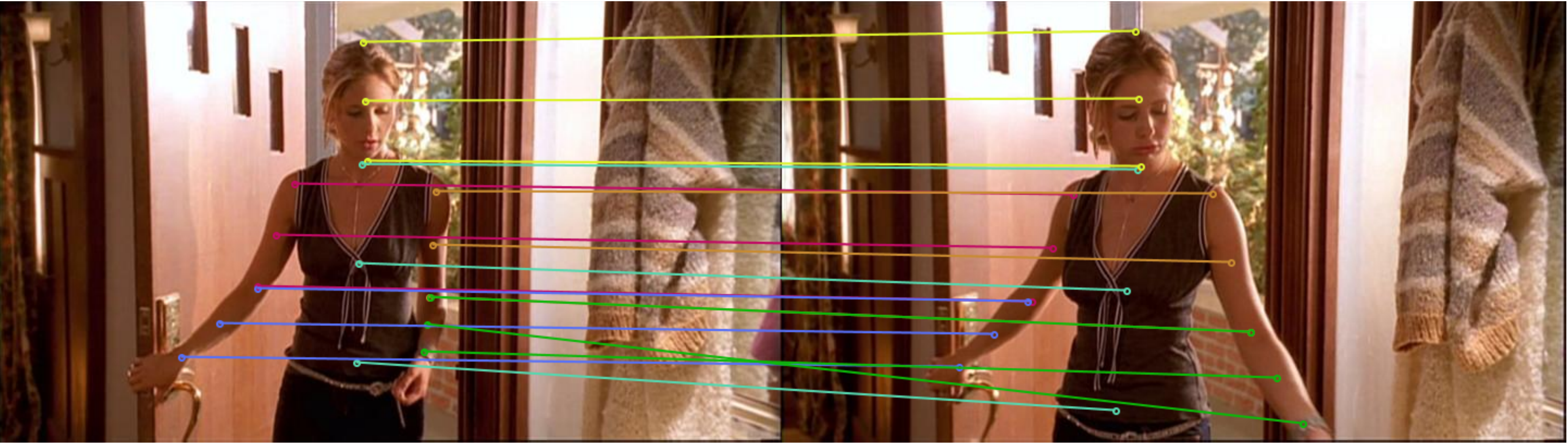}
			\caption{\small Image matching on the Buffy Stickmen dataset, predicted by our randomized approach with latent variables. The problem is challenging since the dataset contains different episodes and people.}
			\label{fig:buffy}
		\end{figure}


\paragraph{Future directions.}
	The randomization of the latent space in the calculation of the margin is of high interest. 
	Despite leading to a looser upper bound of the Gibbs decoder distortion, if one could control the statistical accuracy under this approach then one could obtain a fully polynomial-time evaluation of the objective function, even if $|\H|$ is exponential.
	Therefore, whether this method is feasible, and under what technical conditions, are potential future work.
	The analysis of other non-Gaussian perturbation models from the computational and statistical viewpoints is also of interest.
	Finally, it would be interesting to analyze \emph{approximate} inference for prediction on an independent test set.


\bibliographystyle{agsm}
\bibliography{paper}

\normalsize
\clearpage
\begin{appendices}

%
\def\toptitlebar{
	\hrule height4pt
	\vskip .25in}

\def\bottomtitlebar{
	\vskip .25in
	\hrule height1pt
	\vskip .25in}

\thispagestyle{empty}
\hsize\textwidth
\linewidth\hsize \toptitlebar {\centering
{\large\bf SUPPLEMENTARY MATERIAL \\ Learning Latent Variable Structured Prediction Models with Gaussian Perturbations \par}}
\vspace{-0.1in} \bottomtitlebar


\section{Detailed Proofs}
\label{appendix}

In this section, we state the proofs of all the theorems in our manuscript.

\subsection{Proof of Theorem \ref{thrm:pacbayesall}}
\label{app:proof_all}

Here, we provide the proof of Theorem \ref{thrm:pacbayesall}.
First, we derive an intermediate lemma needed for the final proof.

\begin{lemma}[Adapted from Lemma~5 in \cite{McAllester07}] \label{lem:gaussian}
	Assume that there exists a finite integer value $r$ such that, $|\Y_x \times \H_x| \leq r$ for all $(x,y) \in S$.
	Assume also that $\NormII{\Phi(x,y,h)} \leq \gamma$ for any triple $(x,y,h)$.
	Let ${Q(\vw)}$ be a unit-variance Gaussian distribution centered at ${\alpha \vw}$ for $\alpha = \gamma\sqrt{8\log  \frac{rn}{\NormII{\vw}^2} }$.
	Then for all ${(x,y) \in S}$, and all ${\vw \in \W}$, we have:
	\begin{align*}
		\P_{\vw' \sim Q(\vw)}[   m(x,y, \langle f_{\vw'}(x) \rangle, \vw) \geq 1 ] \leq \NormII{\vw}^2/n
	\end{align*}
	\noindent or equivalently:
	\begin{align} \label{eq:gaussianwhp}
		\P_{\vw' \sim Q(\vw)}[ m(x,y, \langle f_{\vw'}(x)\rangle, \vw) \leq 1] \geq 1 - \NormII{\vw}^2/n
	\end{align}
\end{lemma}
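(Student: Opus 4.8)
The plan is to bound the bad event pointwise over the finite set $\Y_x \times \H_x$ and then apply a union bound, adapting the argument of Lemma~5 in \cite{McAllester07} to the latent margin. First I would fix $(x,y) \in S$ and $\vw \in \W$ and set $\hs = \argmax_{h \in \H_x} \dotprod{\Phi(x,y,h)}{\vw}$, so that for any decoded pair $(\yp,\hp)$ the margin becomes $m(x,y,\yp,\hp,\vw) = \dotprod{(\Phi(x,y,\hs) - \Phi(x,\yp,\hp))}{\vw}$. The crucial observation is that $f_{\vw'}(x)$ is the maximizer of $\dotprod{\Phi(x,\cdot,\cdot)}{\vw'}$ over $\Y_x \times \H_x$ while $(y,\hs)$ is itself a feasible competitor; therefore the event $m(x,y,\langle f_{\vw'}(x)\rangle,\vw) \geq 1$ implies the existence of a pair $(\yp,\hp) \in \Y_x \times \H_x$ (namely the decoded one) that simultaneously satisfies $\dotprod{\Phi(x,\yp,\hp)}{\vw'} \geq \dotprod{\Phi(x,y,\hs)}{\vw'}$ and $\dotprod{(\Phi(x,y,\hs) - \Phi(x,\yp,\hp))}{\vw} \geq 1$.

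This reduction lets me bound the probability of interest by a union bound over only those (at most $r$) pairs that satisfy the second, \emph{deterministic} inequality. For such a pair write $\Delta = \Phi(x,y,\hs) - \Phi(x,\yp,\hp)$, so that the deterministic condition reads $\dotprod{\Delta}{\vw} \geq 1$ and the remaining random event is $\dotprod{\Delta}{\vw'} \leq 0$. Since $\vw' \sim \Gauss{\alpha\vw}{\mI}$, the scalar $\dotprod{\Delta}{\vw'}$ is Gaussian with mean $\alpha \dotprod{\Delta}{\vw} \geq \alpha$ and variance $\NormII{\Delta}^2$, and the triangle inequality together with $\NormII{\Phi(x,y,h)} \leq \gamma$ gives $\NormII{\Delta} \leq 2\gamma$. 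The standard lower-tail bound $\P[Z \leq -t] \leq \exp{-t^2/2}$ applied with $t = \alpha \dotprod{\Delta}{\vw}/\NormII{\Delta} \geq \alpha/(2\gamma)$ then yields $\P_{\vw'}[\dotprod{\Delta}{\vw'} \leq 0] \leq \exp{-\alpha^2/(8\gamma^2)}$.

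Finally I would substitute $\alpha = \gamma\sqrt{8\log(rn/\NormII{\vw}^2)}$, which makes $\alpha^2/(8\gamma^2) = \log(rn/\NormII{\vw}^2)$ and hence bounds each surviving term by $\NormII{\vw}^2/(rn)$; summing over the at most $r$ offending pairs gives the claimed $\NormII{\vw}^2/n$, and the equivalent form \eqref{eq:gaussianwhp} follows by complementation. I expect the main subtlety to be the reduction of the first paragraph rather than any calculation: one must argue carefully that the $\vw'$-dependent random argmax can be replaced by an existential quantifier over the \emph{fixed} finite set $\Y_x \times \H_x$, so that the union bound runs over a collection of pairs whose deterministic part does not depend on $\vw'$, and one must check that the $\max_{h}$ in the definition of the margin is correctly represented by the single competitor $(y,\hs)$.
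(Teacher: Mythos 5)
Your proposal is correct and follows essentially the same route as the paper's proof: a union bound over the at most $r$ pairs in $\Y_x\times\H_x$ with deterministic margin at least $1$, followed by the one-dimensional Gaussian tail bound applied to $\dotprod{\Delta}{\vw'}$ with $\NormII{\Delta}\leq 2\gamma$ and the choice $\eps=\alpha/\NormII{\Delta}$, yielding $\exp{-\alpha^2/(8\gamma^2)}=\NormII{\vw}^2/(rn)$ per pair. The subtlety you flag — replacing the $\vw'$-dependent argmax by an existential over the fixed finite set, with $(y,\hs)$ as the feasible competitor so that $f_{\vw'}(x)=(\yh,\hh)$ implies $m(x,y,\yh,\hh,\vw')\leq 0$ — is exactly the step the paper's proof relies on as well.
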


\begin{proof}
	Note that the randomness in the statement comes from the variable $\vw'$, then by a union bound on the elements of $\Y_x \times \H_x$ it suffices to show that for any given $(\yh, \hh)$ with $m(x,y,\yh,\hh,\vw) \geq 1$, the probability that $ f_{\vw'} (x) = (\yh, \hh)$ is at most $\NormII{\vw}^2/(rn)$. 
	
	Consider a fixed $(\yh, \hh) \in \Y_x \times \H_x$ with $m(x,y,\yh,\hh,\vw) \geq 1$. First, by well-know concentration inequalities we have that for any vector $\Psi \in \R^\ell$ with $\norm{\Psi}_2 = 1$ and $\eps \geq 0$:
	\begin{equation}
	\label{eq:lemma1_1}
	\P_{\vw' \sim Q(\vw)}[ (\alpha \vw - \vw') \cdot \Psi \geq \eps ] \leq  \exp{-\eps^2/2} 
	\end{equation}
	Let $\hs = \argmax_{h \in \H_x} \Phi(x,y,h) \cdot \vw$, and let $\Delta(x,y,\hs, \yh, \hh) = \Phi(x,y,\hs) - \Phi(x,\yh,\hh)$. Then, $m(x,y,\yh,\hh,\vw) = \Delta(x,y,\hs, \yh, \hh) \cdot \vw$. Using $\Psi = \Delta(x,y,\hs, \yh, \hh) / \lVert \Delta(x,y,\hs, \yh, \hh) \rVert_2$ in \eqref{eq:lemma1_1} we have:
	\begin{align} \refstepcounter{equation}
		\P_{\vw' \sim Q(\vw)}[  m(x,y,\yh,\hh,\vw') \leq \alpha m(x,y,\yh,\hh,\vw) - \eps \lVert \Delta(x,y,\hs, \yh, \hh) \rVert_2 ] &\leq  \exp{-\eps^2/2}  \nonumber \\
		\P_{\vw' \sim Q(\vw)}[  m(x,y,\yh,\hh,\vw') \leq \alpha  - \eps \lVert \Delta(x,y,\hs, \yh, \hh) \rVert_2 ] &\leq  \exp{-\eps^2/2} \nonumber \\
		\P_{\vw' \sim Q(\vw)}[  m(x,y,\yh,\hh,\vw') \leq 0 ] &\leq  \exp{-\alpha^2/(8\gamma^2)} \tag{\theequation.a} \label{eq:lemma_1_2}\\
		\P_{\vw' \sim Q(\vw)}[   f_{\wp} (x) = (\yh, \hh) ] &\leq  \exp{-\alpha^2/(8\gamma^2)} \nonumber
	\end{align}
	where the step in \eqref{eq:lemma_1_2} follows from $\eps = \alpha / \lVert \Delta(x,y,\hs, \yh, \hh) \rVert_2 $ and $\lVert \Delta(x,y,\hs, \yh, \hh) \rVert_2\leq 2\gamma$.
	Thus, we prove our claim.
\end{proof}

Next, we provide the final proof.

\begin{proof}[Proof of Theorem \ref{thrm:pacbayesall}]
	Define the Gibbs decoder \emph{empirical} distortion of the perturbation distribution ${Q(\vw)}$ and training set $S$ as:
	\begin{align*}
	L(Q(\vw),S) = \frac{1}{n} \sum_{(x,y) \in S}{\E_{\vw' \sim Q(\vw)}[d(y, \Inn{f_{\vw'}(x)})]}
	\end{align*}
	In PAC-Bayes terminology, ${Q(\vw)}$ is the \emph{posterior} distribution.
	Let the \emph{prior} distribution $P$ be the unit-variance zero-mean Gaussian distribution.
	Fix ${\delta \in (0,1)}$ and ${\alpha>0}$.
	By well-known PAC-Bayes proof techniques, Lemma~4 in \cite{McAllester07} shows that with probability at least ${1-\delta/2}$ over the choice of $n$ training samples, simultaneously for all parameters ${\vw \in \W}$, and unit-variance Gaussian posterior distributions ${Q(\vw)}$ centered at ${\vw \alpha}$, we have:
	\begin{align} \label{eq:pacbayes}
	L(Q(\vw),D) & \leq L(Q(\vw),S) + \sqrt{\frac{KL(Q(\vw) \| P) + \log{(2n/\delta)}}{2(n-1)}} \nonumber \\
	& = L(Q(\vw),S) + \sqrt{\frac{\norm{w}_2^2 \alpha^2/2 + \log{(2n/\delta)}}{2(n-1)}}
	\end{align}
	Thus, an upper bound of ${L(Q(\vw),S)}$ would lead to an upper bound of ${L(Q(\vw),D)}$.
	In order to upper-bound ${L(Q(\vw),S)}$, we can upper-bound each of its summands, i.e., we can upper-bound ${\E_{\vw' \sim Q(\vw)}[d(y, f_{\vw'}(x))]}$ for each ${(x,y) \in S}$.
	Define the distribution ${Q(\vw,x)}$ with support on ${\Y_x \times \H_x}$ in the following form for all ${y \in \Y_x}$ and $h \in \H_x$:
	\begin{align} \label{eq:Qwx}
	\P_{ (\yp,\hp) \sim Q(\vw,x)}[(\yp,\hp) = (y,h)] \equiv \P_{\vw' \sim Q(\vw)}[f_{\vw'}(x) = (y,h)]
	\end{align}
	For clarity of presentation, define:
	\begin{align*}
	u(x,y,\yp,\hp,\vw) \equiv 1 - m(x,y,\yp,\hp,\vw)
	\end{align*}
	Let ${u \equiv u(x,y, \Inn{f_{\vw'}(x)},\vw)}$.
	Simultaneously for all ${(x,y) \in S}$, we have:
	\begin{align} \refstepcounter{equation}
	\Exp{\vw' \sim Q(\vw)}{d(y,\Inn{f_{\vw'}(x)}} & = \Exp{\vw' \sim Q(\vw)}{d(y, \Inn{f_{\vw'}(x)}) {\rm\ } \Ind{u \geq 0} + d(y,\Inn{f_{\vw'}(x)}) {\rm\ } \Ind{u < 0}} \nonumber \\
	& \leq \Exp{\vw' \sim Q(\vw)}{d(y, \Inn{f_{\vw'}(x)}) {\rm\ } \Ind{u \geq 0} + \Ind{u < 0}} \tag{\theequation.a}\label{eq:dtimesiversontoiverson} \\
	& = \Exp{\vw' \sim Q(\vw)}{d(y,\Inn{f_{\vw'}(x)} {\rm\ } \Ind{u \geq 0}} + \P_{\vw' \sim Q(\vw)}[u < 0] \nonumber \\
	& \leq \Exp{\vw' \sim Q(\vw)}{d(y,\Inn{f_{\vw'}(x)} {\rm\ } \Ind{u \geq 0}} + \norm{\vw}_2^2/n \tag{\theequation.b}\label{eq:highprobHm} \\
	& = \Exp{\vw' \sim Q(\vw)}{ d(y, \Inn{f_{\vw'}(x)} {\rm\ } \Ind{u(x,y,\Inn{f_{\vw'}(x)},\vw) \geq 0} } + \norm{\vw}_2^2/n \nonumber \\
	& = \Exp{ (\yp,\hp) \sim Q(\vw,x) }{d(y,\yp,\hp) {\rm\ } \Ind{u(x,y,\yp,\hp,\vw) \geq 0}} + \norm{\vw}_2^2/n \tag{\theequation.c}\label{eq:QwtoQwx} \\
	& \leq \max_{(\yh,\hh) \in \Y_x \times \H_x}{d(y,\yh,\hh) {\rm\ } \Ind{u(x,y,\yh,\hh,\vw) \geq 0}} + \norm{\vw}_2^2/n \tag{\theequation.d}\label{eq:expectedtomax}
	\end{align}
	\noindent where the step in eq.\eqref{eq:dtimesiversontoiverson} holds since ${d : \Y \times \Y \times \H \to [0,1]}$.
	The step in eq.\eqref{eq:highprobHm} follows from Lemma \ref{lem:gaussian} which states that ${\P_{\vw' \sim Q(\vw)}[u(x,y,\Inn{f_{\vw'}(x)},\vw) < 0] \leq \norm{\vw}_2^2/n}$ for $\alpha = \gamma\sqrt{8\log{(rn/\norm{\vw}_2^2)}}$, for all ${(x,y) \in S}$ and all $\vw \in \W$.
	By the definition in eq.\eqref{eq:Qwx}, then the step in eq.\eqref{eq:QwtoQwx} holds.
	Let ${\lambda : \Y \times \H \to [0,1]}$ be some arbitrary function, the step in eq.\eqref{eq:expectedtomax} uses the fact that ${\E_{(y,h)}[\lambda(y,h)] \leq \max_{(y,h)}{\lambda(y,h)}}$.
	
	By eq.\eqref{eq:pacbayes} and eq.\eqref{eq:expectedtomax}, we prove our claim.
	\qedhere
\end{proof}


\subsection{Proof of Theorem \ref{thm:pacbayesrandom}}
\label{app:proof_random}

Here, we provide the proof of Theorem \ref{thm:pacbayesrandom}.
First, we derive an intermediate lemma needed for the final proof.

\begin{lemma} \label{lem:phmbound}
	Let ${\Delta \in \R^\ell}$ be a random variable with $\NormII{\Delta} \leq 2\gamma$, and ${\vw \in \R^\ell}$ be a constant.
	If $\dotprod{\E[\Delta]}{\vw} \leq 1/2  $ then we have:
	%
	\begin{align*}
	\P[  \dotprod{\Delta}{\vw} > 1] \leq \lexp{\frac{-1}{128\gamma^2\NormII{\vw}^2}}
	\end{align*}
\end{lemma}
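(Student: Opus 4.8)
The plan is to reduce the vector statement to a scalar concentration bound for a single bounded random variable. First I would introduce the scalar random variable $Z = \dotprod{\Delta}{\vw}$. Since $\NormII{\Delta} \leq 2\gamma$ almost surely and $\vw$ is a constant, the Cauchy--Schwarz inequality gives $|Z| = |\dotprod{\Delta}{\vw}| \leq \NormII{\Delta}\,\NormII{\vw} \leq 2\gamma\NormII{\vw}$, so $Z$ takes values in the interval $[-2\gamma\NormII{\vw},\, 2\gamma\NormII{\vw}]$, whose length is $L = 4\gamma\NormII{\vw}$. By linearity of expectation, $\E[Z] = \dotprod{\E[\Delta]}{\vw} \leq 1/2$, which is precisely the hypothesis.

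Next I would use the mean bound to rewrite the target event as a deviation event. Since $\E[Z] \leq 1/2$, we have $\P[Z > 1] \leq \P[Z - \E[Z] > 1 - \E[Z]] \leq \P[Z - \E[Z] > 1/2]$, because $1 - \E[Z] \geq 1/2$. This isolates the upper-tail deviation of $Z$ from its mean by at least $1/2$, eliminating any residual dependence on the unknown exact value of $\E[Z]$.

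Then I would apply a one-sided Hoeffding-type inequality to the single bounded variable $Z$. Bounding the moment generating function of $Z - \E[Z]$ via Hoeffding's lemma for a variable supported in an interval of length $L$, and then using Chernoff's method, yields a sub-Gaussian tail of the form $\P[Z - \E[Z] > t] \leq \lexp{-2t^2/L^2}$. Substituting $t = 1/2$ and $L = 4\gamma\NormII{\vw}$ gives an exponent of $-1/(32\gamma^2\NormII{\vw}^2)$, which is already at least as strong as the claimed $-1/(128\gamma^2\NormII{\vw}^2)$; being deliberately generous with the sub-Gaussian constant (e.g.\ using the crude range bound $L^2$ in place of the optimal $L^2/4$) lands exactly on the stated factor of $128$.

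I do not anticipate a genuine obstacle, since this is a single-variable concentration argument rather than a concentration of a sum. The only points demanding care are (i) justifying the almost-sure boundedness of $Z$ through Cauchy--Schwarz so that Hoeffding's lemma is applicable, and (ii) translating the assumption $\dotprod{\E[\Delta]}{\vw} \leq 1/2$ cleanly into a fixed deviation of $1/2$. The loose constant $128$ is then retained deliberately so that it matches the constant $128\gamma^2\NormII{\vw}^2$ propagated through the sample-size choice and the bound in Theorem \ref{thm:pacbayesrandom}.
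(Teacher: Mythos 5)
Your proposal is correct and follows essentially the same route as the paper's proof: reduce $\P[\dotprod{\Delta}{\vw} > 1]$ to the deviation event $\P[Z - \E[Z] \geq 1/2]$ using the hypothesis $\E[Z] \leq 1/2$, then apply Chernoff's method with Hoeffding's lemma. The only difference is that you apply Hoeffding's lemma directly to the range $[-2\gamma\NormII{\vw}, 2\gamma\NormII{\vw}]$ of $Z$ and obtain the sharper exponent $-1/(32\gamma^2\NormII{\vw}^2)$, whereas the paper bounds the centered variable's range as $[-4\gamma\NormII{\vw}, 4\gamma\NormII{\vw}]$ via the triangle inequality and lands on $128$ exactly; your bound implies the stated one a fortiori.
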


\begin{proof}
	Let ${t > 0}$, we have that:
	\begin{align} \refstepcounter{equation}
	\P[ \dotprod{\Delta}{\vw} > 1] & =  \P[ \dotprod{(\Delta - \E[\Delta])}{\vw} > 1 - \dotprod{\E[ \Delta ]}{\vw}] \nonumber \\
	& \leq \P[\dotprod{(\Delta - \E[\Delta])}{\vw} \geq 1/2] \tag{\theequation.b}\label{eq:uselownorm} \\
	& = \P[\lexp{t \dotprod{(\Delta - \E[\Delta])}{\vw}} \geq \exp{t/2}] \nonumber \\
	& \leq \exp{-t/2} \, \E[\lexp{t \dotprod{(\Delta - \E[\Delta])}{\vw}}] \tag{\theequation.c}\label{eq:usemarkov} \\
	& \leq \lexp{-t/2 + 8t^2\gamma^2\NormII{\vw}^2} \tag{\theequation.d}\label{eq:usehoeffding}
	\end{align}
	%
	The step in eq.\eqref{eq:uselownorm} follows from $\dotprod{\E[\Delta]}{\vw} \leq 1/2$ and thus ${1-\dotprod{\E[\Delta]}{\vw} \geq 1/2}$.
	The step in eq.\eqref{eq:usemarkov} follows from Markov's inequality.
	The step in eq.\eqref{eq:usehoeffding} follows from Hoeffding's lemma and the fact that the random variable ${z = \dotprod{(\Delta - \E[\Delta])}{\vw}}$ fulfills ${\E[z] = 0}$ as well as ${z \in [-4\gamma\NormII{\vw}, +4\gamma\NormII{\vw}]}$.
	In more detail, note that ${\NormII{\Delta} \leq 2\gamma}$ and by Jensen’s inequality $ \NormII{\E[\Delta]} \leq \E[\NormII{\Delta}] \leq 2\gamma$.
	Then, note that by Cauchy-Schwarz inequality ${|\dotprod{(\Delta - \E[\Delta])}{\vw}| \leq \norm{\Delta - \E[\Delta]}_2 \norm{\vw}_2 \leq (\norm{\Delta}_2 + \norm{\E[\Delta]}_2) \norm{\vw}_2 \leq 4\gamma \norm{\vw}_2}$.
	Finally, let $g(t) = -t/2 + 8t^2\gamma^2\NormII{\vw}^2$.
	By making ${\partial g / \partial t = 0}$, we get the optimal setting ${t^* = 1/(32\gamma^2\NormII{\vw}^2)}$.
	Thus, ${g(t^*) = -1/(128\gamma^2\NormII{\vw}^2)}$ and we prove our claim.
	\qedhere
\end{proof}

Next, we provide the final proof.

\begin{proof}[Proof of Theorem \ref{thm:pacbayesrandom}]
	Note that sampling from the distribution ${Q(\vw,x)}$ as defined in eq.\eqref{eq:Qwx} is NP-hard in general, thus our plan is to upper-bound the expectation in eq.\eqref{eq:QwtoQwx} by using the maximum over random structured outputs and latent variables sampled independently from a proposal distribution ${R(\vw,x)}$ with support on ${\Y_x \times \H_x}$.
	
	Let ${T(\vw,x)}$ be a set of $\m$ i.i.d. random structured outputs and latent variables drawn from the proposal distribution ${R(\vw,x)}$, i.e., ${T(\vw,x) \sim R(\vw,x)^{\m}}$.
	Furthermore, let $\T(\vw)$ be the collection of the $n$ sets ${T(\vw,x)}$ for all ${(x,y) \in S}$, i.e. ${\T(\vw) \equiv \{T(\vw,x)\}_{(x,y) \in S}}$ and thus ${\T(w) \sim \{R(\vw,x)^{\m}\}_{(x,y) \in S}}$.
	For clarity of presentation, define:
	\begin{align*}
	v(x,y,\yp,\hp,\vw) \equiv d(y,\yp,\hp) {\rm\ } \Ind{m(x,y,\yp,\hp,\vw) \leq 1}
	\end{align*}
	For sets ${T(\vw,x)}$ of sufficient size $\m$, our goal is to upper-bound eq.\eqref{eq:QwtoQwx} in the following form for all parameters ${\vw \in \W}$:
	\begin{align*}
	\frac{1}{n} \sum_{(x,y) \in S}{ \E_{(\yp,\hp) \sim Q(\vw,x)}[v(x,y,\yp,\hp,\vw)] } \leq \frac{1}{n} \sum_{(x,y) \in S}{ \max_{(\yh,\hh) \in T(\vw,x)}{v(x,y,\yh,\hh,\vw)} } + \BigO{\nicefrac{\log^{2}{n}}{\sqrt{n}}}
	\end{align*}
	Note that the above expression would produce a tighter upper bound than the maximum loss over all possible structured outputs and latent variables since ${\max_{(\yh,\hh) \in T(\vw,x)}{v(x,y,\yh,\hh,\vw)} \leq \max_{(\yh,\hh) \in \Y_x \times \H_x}}{v(x,y,\yh,\hh,\vw)}$.
	For analysis purposes, we decompose the latter equation into two quantities:
	\begin{align}
	A(\vw,S) & \equiv \frac{1}{n} \sum_{(x,y) \in S}{\left( \E_{(\yp,\hp) \sim Q(\vw,x)}[v(x,y,\yp,\hp,\vw)] - \E_{T(\vw,x) \sim R(\vw,x)^{\m}}\left[\max_{(\yh,\hh) \in T(\vw,x)}{v(x,y,\yh,\hh,\vw)}\right] \right)} \label{eq:A} \\
	B(\vw,S,\T(\vw)) & \equiv \frac{1}{n} \sum_{(x,y) \in S}{\left( \E_{T(\vw,x) \sim R(\vw,x)^{\m}}\left[\max_{(\yh,\hh) \in T(\vw,x)}{v(x,y,\yh,\hh,\vw)}\right] - \max_{(\yh,\hh) \in T(\vw,x)}{v(x,y,\yh,\hh,\vw)} \right)} \label{eq:B}
	\end{align}
	Thus, we will show that ${A(\vw,S) \leq \sqrt{\nicefrac{1}{n}}}$ and ${B(\vw,S,\T(\vw)) \leq \O(\nicefrac{\log^{2}{n}}{\sqrt{n}})}$ for all parameters ${\vw \in \W}$, any training set $S$ and all collections $\T(\vw)$, and therefore ${A(\vw,S) + B(\vw,S,\T(\vw)) \leq \O(\nicefrac{\log^{2}{n}}{\sqrt{n}})}$.
	Note that while the value of ${A(\vw,S)}$ is deterministic, the value of ${B(\vw,S,\T(\vw))}$ is stochastic given that $\T(\vw)$ is a collection of sampled random structured outputs.
	
	Fix a specific $\vw \in \W$.
	If data is separable then $v(x,y,\yp,\hp,\vw) = 0$ for all ${(x,y) \in S}$ and ${(\yp,\hp) \in \Y_x \times \H_x}$.
	Thus, we have ${A(\vw,S) = B(\vw,S,\T(\vw)) = 0}$ and we complete our proof for the separable case.\footnote{
		The same result can be obtained for any subset of $S$ for which the ``separability'' condition holds.
		Therefore, our analysis with the ``non-separability'' condition can be seen as a worst case scenario.}
	In what follows, we focus on the non-separable case.

	\paragraph{Bounding the Deterministic Expectation ${A(\vw,S)}$.}
	
	Here, we show that in eq.\eqref{eq:A}, ${A(\vw,S) \leq \sqrt{\nicefrac{1}{n}}}$ for all parameters ${\vw \in \W}$ and any training set $S$, provided that we use a sufficient number $\m$ of random structured outputs sampled from the proposal distribution.
	
	By well-known identities, we can rewrite:
	
	\begin{align} \refstepcounter{equation}
	A(\vw,S) & = \frac{1}{n} \sum_{(x,y) \in S}{\int_0^1{\left( \P_{(\yp,\hp) \sim R(\vw,x)}[v(x,y,\yp,\hp,\vw) < z]^{\m} - \P_{(\yp,\hp) \sim Q(\vw,x)}[v(x,y,\yp,\hp,\vw) < z] \right)}dz} \tag{\theequation.a}\label{eq:expectedtoprob} \\
	& \leq \frac{1}{n} \sum_{(x,y) \in S}{ \P_{(\yp,\hp) \sim R(\vw,x)}[v(x,y,\yp,\hp,\vw) < 1]^{\m} } \nonumber \\
	& = \frac{1}{n} \sum_{(x,y) \in S}{ \left(1 - \P_{(\yp,\hp) \sim R(\vw,x)}[d(y,\yp,\hp) = 1 \ {\rm and} \  m(x,y,\yp,\hp,\vw) \leq 1]\right)^{\m} } \nonumber \\
	& \leq \frac{1}{n} \sum_{(x,y) \in S}{ \left( 1 - \min{\left( \P_{(\yp,\hp) \sim R(\vw,x)}[d(y,\yp,\hp) = 1] {\rm\ ,\ } \P_{(\yp,\hp) \sim R(\vw,x)}[m(x,y,\yp,\hp,\vw) \leq 1] \right)} \right)^{\m} } \nonumber \\
	& = \frac{1}{n} \sum_{(x,y) \in S}{ \max{\left( 1 - \P_{(\yp,\hp) \sim R(\vw,x)}[d(y,\yp,\hp) = 1] {\rm\ ,\ } \P_{(\yp,\hp) \sim R(\vw,x)}[ m(x,y,\yp,\hp,\vw) > 1] \right)}^{\m} } \nonumber \\
	& \leq \max{\left( \beta {\rm\ ,\ } \lexp{\frac{-1}{128\gamma^2\NormII{\vw}^2}} \right)}^{\m} \tag{\theequation.b}\label{eq:betaphmbound} \\
	& = \sqrt{1/n} \tag{\theequation.c}\label{eq:Abound}
	\end{align}
	\noindent where the step in eq.\eqref{eq:expectedtoprob} holds since for two independent random variables ${g,h \in [0,1]}$, we have ${\E[g] = 1-\int_0^1{\P[g < z] dz}}$ and ${\P[\max{(g,h)} < z] = \P[g < z]\P[h < z]}$.
	Therefore, $\E[\max{(g,h)}] = 1-\int_0^1{\P[g < z]\P[h < z] dz}$.
	For the step in eq.\eqref{eq:betaphmbound}, we used Assumption \ref{asm:maxdistortion} for the first term in the $\max$.
	For the second term in the $\max$, let ${\Delta \equiv \Phi(x,y, \hs) - \Phi(x,\yp, \hp)}$ where $\hs = \argmax_{h \in \H_x} \Phi(x,y,h)\cdot \vw$, then  ${m(x,y,\yp,\hp,\vw) = \dotprod{\Delta}{\vw}}$.
	From $\NormII{\Phi(x,y,h)} \leq \gamma$,  we have that $\NormII{\Delta} \leq 2\gamma$.
	By Assumption \ref{asm:lownorm}, we have that $\NormII{\E[\Delta]} \leq 1/(2\sqrt{n}) \leq 1/(2 \norm{\vw}_2)$.
	By Cauchy-Schwarz inequality we have ${\dotprod{\E[\Delta]}{\vw} \leq \norm{\E[\Delta]}_2 \norm{\vw}_2 \leq \norm{\vw}_2/(2 \norm{\vw}_2) \leq 1/2}$.
	Since ${\dotprod{\E[\Delta]}{\vw} \leq 1/2}$ and $\NormII{\Delta} \leq 2\gamma$, we apply Lemma \ref{lem:phmbound} in the step in eq.\eqref{eq:betaphmbound}.
	For the step in eq.\eqref{eq:Abound}, let ${\lambda \equiv \max{\left(\frac{1}{\log{(1/\beta)}}, 128\gamma^2\NormII{\vw}^2\right)}}$.
	Note that ${\max{\left( \beta {\rm\ ,\ } \lexp{\frac{-1}{128\gamma^2\NormII{\vw}^2}} \right)} = \exp{-1/\lambda}}$.
	Furthermore, let ${\m = \frac{1}{2} \lambda \log{n}}$.
	Therefore, ${\max{\left( \beta {\rm\ ,\ } \lexp{-\frac{1}{128\gamma^2\NormII{\vw}^2}} \right)}^{\m} = (\exp{-1/\lambda})^{\frac{1}{2} \lambda \log{n}} = \exp{\frac{-1}{2} \log{n}} = \sqrt{1/n}}$.

	\paragraph{Bounding the Stochastic Quantity ${B(\vw,S,\T(\vw))}$.}
	
	Here, we show that in eq.\eqref{eq:B}, ${B(\vw,S,\T(\vw)) \leq \O(\nicefrac{\log^{2}{n}}{\sqrt{n}})}$ for all parameters ${\vw \in \W}$, any training set $S$ and all collections $\T(\vw)$.
	For clarity of presentation, define:
	\begin{align*}
	g(x,y,T,\vw) \equiv \max_{(\yh,\hh) \in T} v(x,y,\yh,\hh,\vw)
	\end{align*}
	Thus, we can rewrite:
	\begin{align*}
	B(\vw,S,\T(\vw)) = \frac{1}{n} \sum_{(x,y) \in S}{\left( \E_{T(\vw,x) \sim R(\vw,x)^{\m}}[g(x,y,T(\vw,x),\vw)] - g(x,y,T(\vw,x),\vw) \right)}
	\end{align*}
	Let ${r_x \equiv |\Y_x \times \H_x|}$ and thus ${\Y_x \times \H_x \equiv \{(y_1,h_1) \dots (y_{r_x}, h_{r_x}) \}}$.
	Let ${\pi(x) = (\pi_1 \dots \pi_{r_x}) }$ be a permutation of ${ \{ 1 \dots r_x \} }$ such that $\dotprod{\Phi(x,y_{\pi_1},h_{\pi_1})}{\vw} < \dots < \dotprod{\Phi(x,y_{\pi_{r_x} }, h_{\pi_{r_x}} ) }{\vw}$.
	Let $\Pi$ be the collection of the $n$ permutations ${\pi(x)}$ for all ${(x,y) \in S}$, i.e. ${\Pi = \{\pi(x)\}_{(x,y) \in S}}$.
	From Assumption \ref{asm:linearordering}, we have that ${R(\pi(x),x) \equiv R(\vw,x)}$.
	Similarly, we rewrite ${T(\pi(x),x) \equiv T(\vw,x)}$ and ${\T(\Pi) \equiv \T(\vw)}$.
	
	Furthermore, let ${\W_{\Pi,S}}$ be the set of all ${\vw \in \W}$ that induce $\Pi$ on the training set $S$.
	For the parameter space $\W$, collection $\Pi$ and training set $S$, define the function class ${\G_{\W,\Pi,S}}$ as follows:
	\begin{align*}
	\G_{\W,\Pi,S} \equiv \{g(x,y,T,\vw) \mid \vw \in \W_{\Pi,S} \ {\rm and} \  (x,y) \in S \}
	\end{align*}
	Note that since ${|\Y_x \times \H_x| \leq r}$ for all ${(x,y) \in S}$, then ${|\cup_{(x,y) \in S}{\Y_x \times \H_x}| \leq \sum_{(x,y) \in S}|\Y_x \times \H_x| \leq nr}$.
	Note that each ordering of the $nr$ structured outputs completely determines a collection $\Pi$ and thus the collection of proposal distributions ${R(\vw,x)}$ for each ${(x,y) \in S}$.
	Note that since ${|\cup_{(x,y) \in S}{\PS_x}| \leq \ell}$, we consider ${\Phi(x,y,h) \in \R^\ell}$.
	Although we can consider ${\vw \in \R^\ell}$, the vector $\vw$ is sparse with at most $\s$ non-zero entries.
	Thus, we take into account all possible subsets of $\s$ features from $\ell$ possible features.
	From results in \cite{Bennett56,Bennett60,Cover67}, we can conclude that there are at most ${(nr)^{2(\s-1)}}$ linearly inducible orderings, for a fixed set of $\s$ features.
	Therefore, there are at most ${\binom{\ell}\s (nr)^{2(\s-1)} \leq \ell^\s (nr)^{2 \s}}$ collections $\Pi$.
	
	Fix ${\delta \in (0,1)}$.
	By Rademacher-based uniform convergence\footnote{
		Note that for the analysis of ${B(\vw,S,\T(\vw))}$, the training set $S$ is fixed and randomness stems from the collection ${\T(\vw)}$.
		Also, note that for applying McDiarmid's inequality, independence of each set ${T(\vw,x)}$ for all ${(x,y) \in S}$ is a sufficient condition, and identically distributed sets ${T(\vw,x)}$ are not necessary.
	} and by a union bound over all ${\ell^\s (nr)^{2 \s}}$ collections $\Pi$, with probability at least ${1-\delta/2}$ over the choice of $n$ sets of random structured outputs, simultaneously for all parameters ${\vw \in \W}$:
	\begin{align} \label{eq:Bbound}
	B(\vw,S,\T(\vw)) \leq 2 {\rm\ } \Rademacher_{\T(\Pi)}(\G_{\W,\Pi,S}) + 3 \sqrt{\frac{\s (\log{\ell} + 2\log{(nr)}) + \log{(4/\delta)}}{n}}
	\end{align}
	\noindent where ${\Rademacher_{\T(\Pi)}(\G_{\W,\Pi,S})}$ is the \emph{empirical} Rademacher complexity of the function class ${\G_{\W,\Pi,S}}$ with respect to the collection ${\T(\Pi)}$ of the $n$ sets ${T(\pi(x),x)}$ for all ${(x,y) \in S}$.
	Let $\sigma$ be an $n$-dimensional vector of independent Rademacher random variables indexed by ${(x,y) \in S}$, i.e., ${\P[\sigma_{(x,y)}=+1]=\P[\sigma_{(x,y)}=-1]=1/2}$.
	The empirical Rademacher complexity is defined as:
	\begin{align} \refstepcounter{equation}
	\Rademacher_{\T(\Pi)}(\G_{\W,\Pi,S}) & \equiv \E_\sigma\left[ \sup_{g \in \G_{\W,\Pi,S}}{\left( \frac{1}{n} \sum_{(x,y) \in S}{\sigma_{(x,y)} g(x,y,T(\pi(x),x),\vw)} \right)} \right] \nonumber \\
	& = \E_\sigma\left[ \sup_{\vw \in \W_{\Pi,S}}{\left( \frac{1}{n} \sum_{(x,y) \in S}{\sigma_{(x,y)} \max_{(\yh,\hh) \in T(\pi(x),x)}{d(y,\yh,\hh) {\rm\ } \Ind{1 - m(x,y,\yh,\hh,\vw) \geq 0}}} \right)} \right] \nonumber \\
	& = \E_\sigma\left[ \sup_{\vw \in \W_{\Pi,S}}{\left( \frac{1}{n} \sum_{(x,y) \in S}{\sigma_{(x,y)} \max_{(\yh,\hh) \in T(\pi(x),x)}{ \hspace{-0.2in} d(y,\yh,\hh) {\rm\ } \Ind{ 1\geq \max_{h \in \H_x} \dotprod{\Phi(x,y,h)}{\vw} - \dotprod{\Phi(x,\yh,\hh)}{\vw} }}} \right)} \right] \nonumber \\
	& = \E_\sigma\left[ \sup_{\vw \in \R^\ell \setminus \{0\}}{\left( \frac{1}{n} \sum_{i \in \{1 \dots n\}}{\sigma_i \max_{j \in \{1 \dots \m\}}{d_{ij} {\rm\ } \Ind{1 \geq \max_{h \in \{1\ldots |\H_x|\}} \dotprod{z'_{ih}}{\vw} - \dotprod{z_{ij}}{\vw} }}} \right)} \right] \tag{\theequation.a}\label{eq:maxiversonnormlinear} \\
	& \leq \sum_{j \in \{1 \dots \m\}}{ \E_\sigma\left[ \sup_{\vw \in \R^\ell \setminus \{0\}}{\left( \frac{1}{n} \sum_{i \in \{1 \dots n\}}{\sigma_i {\rm\ } d_{ij} {\rm\ } \Ind{1 \geq \max_{h \in \{1\ldots |\H_x|\}} \dotprod{z'_{ih}}{\vw} - \dotprod{z_{ij}}{\vw} }} \right)} \right] } \tag{\theequation.b}\label{eq:maxtosum} \\
	& \leq \sum_{j \in \{1 \dots \m\}}{ \E_\sigma\left[ \sup_{\vw \in \R^\ell \setminus \{0\}}{\left( \frac{1}{n} \sum_{i \in \{1 \dots n\}}{\sigma_i {\rm\ } \Ind{1 \geq  \max_{h \in \{1\ldots |\H_x|\}} \dotprod{z'_{ih}}{\vw} - \dotprod{z_{ij}}{\vw} }} \right)} \right] } \tag{\theequation.c}\label{eq:composition} \\
	& \leq \sum_{j \in \{1 \dots \m\}}{ \E_\sigma\left[ \sup_{\tilde{\vw} \in \R^{\ell(|\H|+1)+1} \setminus \{0\}}{\left( \frac{1}{n} \sum_{i \in \{1 \dots n\}}{\sigma_i {\rm\ } \Ind{\dotprod{z_{ij}^{\H}}{\tilde{\vw}} \geq 0}} \right)} \right] } \tag{\theequation.d}\label{eq:removenorm} \\
	& \leq 2 \m \sqrt{\frac{(2\s+1) \log{(\ell(nr+1)+1)} \log{(n+1)}}{n}} \tag{\theequation.e}\label{eq:rademacher}
	\end{align}
	\noindent where in the step in eq.\eqref{eq:maxiversonnormlinear}, the terms ${\sigma_i}$, ${d_{ij}}$, ${z'_{ih}}$, $z_{ij}$ correspond to ${\sigma_{(x,y)}}$, ${d(y,\yh,\hh)}$, $\Phi(x,y,h)$ and $\Phi(x,\yh,\hh)$ respectively.
	Thus, we assume that index $i$ corresponds to the training sample ${(x,y) \in S}$, and that index $j$ corresponds to the structured output and latent variable ${(\yh,\hh) \in T(\pi(x),x)}$.
	Note that since $\Phi(x,y,h) \in \R^\ell$, thus the step in eq.\eqref{eq:maxiversonnormlinear} considers ${\vw,z'_{ih}, z_{ij} \in \R^\ell \setminus \{0\}}$ without loss of generality.
	The step in eq.\eqref{eq:maxtosum} follows from the fact that for any two function classes $\G$ and $\H$, we have that ${\Rademacher(\{\max{(g,h)} \mid g \in \G \ {\rm and} \  h \in \H \}) \leq \Rademacher(\G) + \Rademacher(\H)}$.
	The step in eq.\eqref{eq:composition} follows from the composition lemma and the fact that ${d_{ij} \in [0,1]}$ for all $i$ and $j$.
	The step in eq.\eqref{eq:removenorm} considers a larger function class,  we consider ${\tilde{\vw},z_{ij}^\H \in \R^{\ell(|\H|+1)+1} \setminus \{0\}}$. More detailed, for a fixed $i,j$, and $\vw \in \R^\ell$, we can construct the vectors $z_{ij}^\H = (1,-z'_{i1},\ldots,-z'_{i|\H|},z_{ij})$ and $\tilde{\vw}^{(t)} = (1,\vw^{(1)},\ldots,\vw^{(|H|)},\vw)$, where $\vw^{(l)}=\vw$ if $l=t$, and $\vw^{(l)}=\vzero$ otherwise.
	The step in eq.\eqref{eq:rademacher} follows from the Massart lemma, the Sauer-Shelah lemma and the VC-dimension of sparse linear classifiers.
	That is, for any function class $\G$, we have that ${\Rademacher(\G) \leq \sqrt{\frac{2 VC(\G) \log{(n+1)}}{n}}}$ where ${VC(\G)}$ is the VC-dimension of $\G$.
	Finally, note that $|\Y_x \times \H_x| \leq r$ then $|\H_x| \leq r, \forall (x,y) \in S$, and $|\H| = |\cup_{(x,y) \in S} \H_x| \leq nr$. Also, since $\vw$ is $\s$-sparse, we have that $\tilde{\vw}$ is $(2\s+1)$-sparse. 
	Then, by Theorem~20 of \cite{Neylon06}, ${VC(\G) \leq 2 (2\s+1) \log{(\ell(|\H|+1)+1)}}$ for the class $\G$ of sparse linear classifiers on ${\R^{\ell(|\H|+1)+1}}$, with ${3 \leq 2\s+1 \leq \frac{9}{20} \sqrt{\ell(|\H|+1)+1}}$.
	
	By eq.\eqref{eq:pacbayes}, eq.\eqref{eq:QwtoQwx}, eq.\eqref{eq:Abound}, eq.\eqref{eq:Bbound} and eq.\eqref{eq:rademacher}, we prove our claim.
	\qedhere
\end{proof}




\subsection{Proof of Claim \ref{clm:changeofmeasure}}
\label{app:proof_change_of_measure}

\begin{proof}
 For all ${(x,y) \in S}$ and ${\vw \in \W}$, by definition of the total variation distance, we have for any event ${\A(x,y,\yp,\hp,\vw)}$:
\begin{align*}
\left|\P_{(\yp,\hp) \sim R(\vw,x)}[\A(x,y,\yp,\hp,\vw)] - \P_{(\yp,\hp) \sim \Rp(\vw,x)}[\A(x,y,\yp,\hp,\vw)]\right| & \leq TV(R(\vw,x) \| \Rp(\vw,x))
\end{align*}
Let the event ${\A(x,y,\yp,\hp,\vw): d(y,\yp,h') = 1 \ {\rm and} \  1 - m(x,y,\yp,\hp,\vw) \geq 0}$.
Since ${R(\vw,x)}$ fulfills Assumption \ref{asm:maxdistortion} with value ${\beta_1}$ and since ${TV(R(\vw,x) \| \Rp(\vw,x)) \leq \beta_2}$, we have that for all ${(x,y) \in S}$ and ${\vw \in \W}$:
\begin{align*}
\P_{(\yp,\hp) \sim \Rp(\vw,x)}[\A(x,y,\yp,\hp,\vw)] & \geq \P_{(\yp,\hp) \sim R(\vw,x)}[\A(x,y,\yp,\hp,\vw)] - TV(R(\vw,x) \| \Rp(\vw,x)) \\
 & \geq 1 - \beta_1 - \beta_2
\end{align*}
\noindent which proves our claim.
\qedhere
\end{proof}

\subsection{Proof of Claim \ref{clm:permutations}}
\label{app:proof_permutations}

\begin{proof}
Since $\Y_x$ is the set of all permutations of $v$ elements, then $|\Y_x| = v!$.
In addition, since $d(y,\yp,h) = \frac{1}{v} \sum_{i=1}^v \Ind{y_i \neq \yp_i}$ and since $R(x)$ is a uniform proposal distribution with support on $\Y_x \times \H_x$, we have:
\begin{align}\refstepcounter{equation}
\P_{(\yp,\hp) \sim R(x)} [d(y,\yp,\hp) = 1] &= \P_{\yp} [d(y,\yp) = 1] \nonumber \\
											&= \frac{F(v)}{v!} \tag{\theequation.a} \label{eq:prob} \\
											&\geq 1 - 2/3. \nonumber
\end{align}
For a fixed $y$, the function $F(v)$ in step eq.\eqref{eq:prob} represents the number of permutations $\yp \in \Y_x$ such that $d(y,\yp,h) = 1$.
Moreover, $F(v)$ can be computed through the following recursion: $F(v) = (v-1)! \times (1 + \sum_{i=1}^{v-2} \frac{F(i)}{i!} )$.
The probability is then $F(v)/v!$, it can be seen that this probability converges as $v \to \infty$ through the following: $\lim_{v\to \infty} \frac{F(v+1)}{(v+1)!} - \frac{F(v)}{v!} = 0$. 
The probability converges to $0.3679$ approximately, while achieving a minimum value of $1/3$ at $v=3$.
Hence $\beta = 2/3$.
\qedhere
\end{proof}

\subsection{Proof of Claim \ref{clm:sparsedata}}
\label{app:proof_sparsedata}

\begin{proof}
Let ${\Delta \equiv \Phi(x,y,\hs)-\Phi(x,\yp,\hp)}$.
Let $p \in \PS_x$ be a superindex denoting the partitions,
i.e., for all ${p \in \PS_x}$, let ${\Delta^p \equiv \Phi(x,y,\hs)-\Phi(x,\yp,\hp)}$ for some ${(\yp,\hp) \in \Upsilon_x^p}$.
By assumption, since ${(\yp,\hp) \in \Upsilon_x^p}$ then ${|\Delta^p_p| \leq b}$ and ${(\forall q \neq p) {\rm\ } \Delta^p_q = 0}$.
Therefore:
\begin{align*}
\norm{\E_{(\yp,\hp) \sim R(x)}\left[ \Delta \right]}_2 & = \sqrt{\sum_{q \in \PS_x}{ \E_{(\yp,\hp) \sim R(x)}\left[\Delta_q \right]^2 }} \\
& \leq \sqrt{\sum_{q \in \PS_x}{ \E_{(\yp,\hp) \sim R(x)}\left[|\Delta_q|\right]^2 }} \\
 & = \sqrt{\sum_{q \in \PS_x}{ \left( \sum_{p \in \PS_x}{ \P_{(\yp,\hp) \sim R(x)}[(\yp,\hp) \in \Upsilon_x^p] \; |\Delta^p_q| } \right)^2 }} \\
 & = \sqrt{\sum_{q \in \PS_x}{ \left( \P_{(\yp,\hp) \sim R(x)}[(\yp,\hp) \in \Upsilon_x^q] \; |\Delta^q_q| \right)^2 }} \\
 &\leq \sqrt{|\PS_x| \left(\frac{b}{|\PS_x|}\right)^2} \\
 & = b/\sqrt{|\PS_x|}
\end{align*}
\noindent where we used the fact that for a uniform proposal distribution ${R(x)}$, we have ${\P_{(\yp,\hp) \sim R(\vw,x)}[(\yp,\hp) \in \Upsilon_x^q] = 1/|\PS_x|}$.
Finally, since we assume that ${n \leq |\PS_x|/(4b^2)}$, we have ${b/\sqrt{|\PS_x|} \leq 1/(2 \sqrt{n})}$ and we prove our claim.
\qedhere
\end{proof}

\subsection{Proof of Claim \ref{clm:densedata}}
\label{app:proof_densedata}

\begin{proof}
Let ${\Delta \equiv \Phi(x,y,\hs)-\Phi(x,\yp,\hp)}$.
By assumption ${|\Delta_p| \leq b/|\PS_x|}$ for all ${p \in \PS_x}$.
Therefore:
\begin{align*}
\norm{\E_{(\yp,\hp) \sim R(\vw,x)}\left[ \Delta \right]}_2 & = \sqrt{\sum_{p \in \PS_x}{ \E_{(\yp,\hp) \sim R(\vw,x)} \left[\Delta_p\right]^2 }} \\
 & \leq \sqrt{\sum_{p \in \PS_x}{ \E_{(\yp,\hp) \sim R(\vw,x)} \left[|\Delta_p|\right]^2 }} \\
 & \leq \sqrt{|\PS_x| \left(\frac{b}{|\PS_x|}\right)^2} \\
 & = b/\sqrt{|\PS_x|}
\end{align*}
Finally, since we assume that ${n \leq |\PS_x|/(4b^2)}$, we have ${b/\sqrt{|\PS_x|} \leq 1/(2 \sqrt{n})}$ and we prove our claim.
\qedhere
\end{proof}

\subsection{Proof of Claim \ref{clm:greedylocal}}
\label{app:proof_greedy_alg}

\begin{proof}
Algorithm \ref{alg:greedylocal} depends solely on the linear ordering induced by the parameter $\vw$ and the mapping ${\Phi(x,\cdot)}$.
That is, at any point in time, Algorithm \ref{alg:greedylocal} executes comparisons of the form ${\dotprod{\Phi(x,y,h)}{\vw} > \dotprod{\Phi(x,\yh,\hh)}{\vw}}$ for any two pair of structured outputs and latent variables $(y,h)$ and $(\yh,\hh)$.
\qedhere
\end{proof}

\section{Discussion, Further Examples and Details of Experiments}
\label{appendix_discussion}

\subsection{Discussion}
\label{sec:discussion}

In this section, we discuss in more detail the inference problem. We also briefly discuss the non-convexity of the formulation in eq.\eqref{eq:slack_random}.

\paragraph{Inference on Test Data.}
The upper bound in Theorem \ref{thm:pacbayesrandom} holds simultaneously for all parameters ${\vw \in \W}$.
Therefore, our result implies that after learning the optimal parameter ${\hat{\vw} \in \W}$ in eq.\eqref{eq:slack_random} from \emph{training} data, we can bound the decoder distortion when performing \emph{exact} inference on \emph{test} data.
More formally, Theorem \ref{thm:pacbayesrandom} can be additionally invoked for a \emph{test} set $\Sp$, also with probability at least ${1-\delta}$.
Thus, under the same setting as of Theorem \ref{thm:pacbayesrandom}, the Gibbs decoder distortion is upper-bounded with probability at least ${1-2\delta}$ over the choice of $S$ and $\Sp$.
In this paper, we focus on learning the parameter of structured prediction models. We leave the analysis of approximate  inference on test data for future work.

\paragraph{A Non-Convex Formulation.}
As mentioned in Section \ref{sec:preliminaries}, all formulations with latent variables (eq.\eqref{eq:cccp},eq.\eqref{eq:slack_all}, and eq.\eqref{eq:slack_random}) are non-convex objectives. The motivation to use the margin re-scaling approach in the work of \citet{Yu09} is that the non-convex objective leads to a difference of two convex functions, which allows the use of CCCP \cite{yuille2002concave}. In the case of models without latent variables, \citet{sarawagi2008accurate} propose a method to reduce the problem of slack re-scaling to a series of modified margin re-scaling problems.
However, there are two main caveats in their approach.
First, the optimization is only heuristic, that is, it is not guaranteed to solve the slack rescaling objective exactly.
Second, their method is specific to the cutting plane training algorithm and does not easily extend to stochastic algorithms.
\citet{choi2016fast} propose efficient methods for finding the most-violating-label in a slack re-scaling formulation, given an oracle that returns the most-violating-label in a (slightly modified) margin re-scaling formulation. 
However, in the case of latent models, it is still unclear if this sort of reductions are possible for the slack re-scaling approach because of the maximization in the margin with respect to the latent space. 

We also note that one way to make the objective in eq.\eqref{eq:slack_all} convex is to replace the maximization in the margin by the latent variable $\hh$. However, this not only results in a looser upper bound of the Gibbs decoder distortion but also under performs with respect to the methods mentioned in this paper.

\paragraph{Randomizing the Latent Space.}
We note that in the definition of the margin, there is a maximization over the latent space $\H$. In this paper, we sample structured outputs and latent variables from some proposal distribution and these samples are used in the outer maximization in eq.\eqref{eq:slack_random}. While sampling latent variables from some proposal distribution in the maximization of the margin might be computationally appealing, the main issue is that this will lead to a looser upper bound of the Gibbs decoder distortion.

\subsection{Further examples for Assumption \ref{asm:maxdistortion}}
\label{app:examples}

For completeness, we present the examples provided in \cite{honorio2016} since we make use of the suggested $\beta$ values in our synthetic experiments. Although their proofs are given without using latent variables, it is straightforward to extend their claims by marginalizing on $h$.

\paragraph{Any type of structured output for binary distortion functions.}
\label{clm:anystruct}
Let ${\Y_x \times \H_x}$ be an arbitrary countable set of feasible decodings of $x$, such that ${|\Y_x| \geq 2}$ for all ${(x,y) \in S}$.
Let ${d(y,\yp,h) = \Ind{y \neq \yp}}$.
The uniform proposal distribution ${R(\vw,x) = R(x)}$ with support on ${\Y_x \times \H_x}$ fulfills Assumption \ref{asm:maxdistortion} with ${\beta = 1/2}$.

\paragraph{Directed spanning trees for a distortion function that returns the number of different edges.}
\label{clm:trees}
Let ${\Y_x}$ be the set of directed spanning trees of $v$ nodes.
Let ${A(y)}$ be the adjacency matrix of ${y \in \Y_x}$.
Let ${d(y,\yp,h) = \frac{1}{2 (v-1)} \sum_{ij}{|A(y)_{ij} - A(\yp)_{ij}|}}$.
The uniform proposal distribution ${R(\vw,x) = R(x)}$ with support on ${\Y_x \times \H_x}$ fulfills Assumption \ref{asm:maxdistortion} with ${\beta = \frac{v-2}{v-1}}$.

\paragraph{Directed acyclic graphs for a distortion function that returns the number of different edges.}
\label{clm:dags}
Let ${\Y_x}$ be the set of directed acyclic graphs of $v$ nodes and $b$ parents per node, such that ${2 \leq b \leq v-2}$.
Let ${A(y)}$ be the adjacency matrix of ${y \in \Y_x}$.
Let ${d(y,\yp,h) = \frac{1}{b(2v-b-1)} \sum_{ij}{|A(y)_{ij} - A(\yp)_{ij}|}}$.
The uniform proposal distribution ${R(\vw,x) = R(x)}$ with support on ${\Y_x \times \H_x}$ fulfills Assumption \ref{asm:maxdistortion} with ${\beta = \frac{b^2+2b+2}{b^2+3b+2}}$.

\paragraph{Cardinality-constrained sets for a distortion function that returns the number of different elements.}
\label{clm:cardsets}
Let ${\Y_x}$ be the set of sets of $b$ elements chosen from $v$ possible elements, such that ${b \leq v/2}$.
Let ${d(y,\yp,h) = \frac{1}{2 b} (|y-\yp| + |\yp-y|)}$.
The uniform proposal distribution ${R(\vw,x) = R(x)}$ with support on ${\Y_x \times \H_x}$ fulfills Assumption \ref{asm:maxdistortion} with ${\beta = 1/2}$.

\subsection{Additional Details of Experiments}
\label{app:experiments}

\paragraph{Synthetic Experiments.}
In order to generate each training sample ${(x,y) \in S}$, we generated a random vector $x$ with independent Bernoulli entries, each with equal probability of being $1$ or $0$. 
The latent space consists of vectors of the same size of $x$ but with only one entry being $1$, intuitively, this bit ``corrects'' one of the entries in $x$. 
After generating $x$, we set ${(y,h) = f_{\vw^*}(x)}$.
That is, we solved eq.\eqref{eq:inferenceall} in order to produce the structured output $y$, and disregard $h$.

We replaced the discontinuous 0/1 loss ${\Ind{z \geq 0}}$ with the convex hinge loss ${\max{(0,1+z)}}$, as it is customary. 
Note however, that even by using the hinge loss, the objective functions in eq.\eqref{eq:cccp}, eq.\eqref{eq:slack_all} and in eq.\eqref{eq:slack_random} are still non-convex with respect to $\vw$. This is due to the maximization over the latent space in the definition of the margin.
We used ${\lambda = 1/n}$ as suggested by Theorems \ref{thrm:pacbayesall} and \ref{thm:pacbayesrandom}, and we performed $30$ iterations of the subgradient descent method with a decaying step size ${1/\sqrt{t}}$ for iteration $t$.
For sampling random structured outputs and latent variables in eq.\eqref{eq:slack_random}, we implemented Algorithm \ref{alg:greedylocal} for directed spanning trees, directed acyclic graphs and cardinality-constrained sets.
We performed the local changes in Algorithm \ref{alg:greedylocal} as follows.  Given a pair $(\yh,\hh)$, making a local change to $(\yh,\hh)$ consists on iterating through all pairs $(\yp,\hp)$ where $\yh$ and $\yp$ differ only in one edge/element, and where the single entries in $\hh$ and $\hp$ are contiguous.
Finally, we used ${\beta = 0.67}$ for directed spanning trees, ${\beta = 0.84}$ for directed acyclic graphs, and ${\beta = 0.5}$ for cardinality-constrained sets, as prescribed by the examples given in Section \ref{app:examples}.

\paragraph{Image Matching.}
Ground truth is provided in the Buffy Stickmen dataset for measuring performance on a test set.
The authors in \cite{gane2014learning,volkovs2012efficient} did not use latent variables, and considered the mapping $\Phi(x,y) = \frac{1}{18} \sum_{i=1}^{18} (\psi(I,i) - \psi(I',y_i))^2$, where $\psi(I,k) \in \R^{128}$ are the SIFT descriptors at scale 5 evaluated at keypoint $k$.
We properly centered the coordinates independently on each frame to avoid modeling translations in $h$.
We use the mapping $\Phi(x,y,h) = (\Phi(x,y), \frac{1}{18} \sum_{i=1}^{18} \NormII{c(I,i)\times h - c(I',y_i)}^2)$, where $c(I,k) \in \R^2$ are the coordinates of keypoint $k$.
Intuitively, we are adding one extra feature that summarizes the change in rotation and scaling of the keypoints, i.e., $\Phi(x,y,h) \in \R^{129}$.

The learning is performed using the random formulation as in eq.\eqref{eq:slack_random}, and using local changes as in Algorithm \ref{alg:greedylocal} for sampling from the proposal distribution. 
As in the synthetic experiments, we also replaced the discontinuous 0/1 loss ${\Ind{z \geq 0}}$ with the convex hinge loss ${\max{(0,1+z)}}$, and followed the local changes in Algorithm \ref{alg:greedylocal} for sampling from the proposal distribution.
The neighborhoods of the structures and latent variables were defined as follow: for a given permutation $y$, we considered $y'$ to be its neighbor, and vice versa, if they have only two mismatched entries.
Similarly, for a given $h$, we considered $h'$ to be its neighbor, and vice versa, if they have only one different entry.

\end{appendices}

\end{document}